\documentclass{article} 
\usepackage{iclr2025_conference,times}

\usepackage{amsmath,amsfonts,bm}

\def\eqref#1{equation~\ref{#1}}

\def\1{\bm{1}}

\DeclareMathAlphabet{\mathsfit}{\encodingdefault}{\sfdefault}{m}{sl}
\SetMathAlphabet{\mathsfit}{bold}{\encodingdefault}{\sfdefault}{bx}{n}

\usepackage{hyperref}
\usepackage{url}
\usepackage[american]{babel}
\usepackage{amsmath}
\usepackage{amssymb}
\usepackage{amsthm}
\usepackage{mathtools} 
\usepackage{booktabs} 
\usepackage{tikz} 
\usepackage{dsfont} 
\usepackage{booktabs}
\usepackage{multirow}
\usepackage{comment}
\usepackage{xspace}
\usepackage{listings}
\usepackage{makecell}
\newtheorem{theorem}{Theorem}[section]
\newtheorem{proposition}[theorem]{Proposition}

\newtheorem{definition}[theorem]{Definition}

\usepackage{siunitx}
\newcommand{\mesure}[2]{%
  #1{\scriptsize $\pm$#2}%
}
\newcommand{\mesurebf}[2]{%
  \textbf{#1}{\scriptsize $\pm$#2}%
}
\newcommand{\float}[2]{%
  #1{\scriptsize $\times 10^{#2}$}%
}
\newcommand{\floatbf}[2]{%
  \textbf{#1}{\scriptsize \boldmath $\times 10^{#2}$}%
}
\usepackage[table]{xcolor} 
\usepackage{booktabs}      
\usepackage{array}         
\usepackage{tabularx}      

\definecolor{improveGreen}{RGB}{144,238,144} 
\definecolor{green}{RGB}{0,128,0}            
\definecolor{degradeRed}{RGB}{255,182,193}   
\definecolor{red}{RGB}{220,20,60}            

\newcommand{\cellheatmap}[3]{%
\begin{tikzpicture}[baseline=(gridcenter.base), scale=0.4]
  \foreach \i in {0,1,2,3}{
    \foreach \j in {0,1,2}{
      \pgfmathparse{int(#1[\i*3+\j])}
      \edef\val{\pgfmathresult}

      \ifnum\val=2 \def\colorcell{green} \fi
      \ifnum\val=1 \def\colorcell{improveGreen} \fi
      \ifnum\val=0 \def\colorcell{white} \fi
      \ifnum\val=-1 \def\colorcell{degradeRed} \fi
      \ifnum\val=-2 \def\colorcell{red} \fi

      \fill[\colorcell] (\j,-\i) rectangle ++(1,-1);
      \draw[black, line width=0.05pt] (\j,-\i) rectangle ++(1,-1);
    }
  }

  \path (0,0.5) rectangle (3,-3.5);

  \node (gridcenter) at (1.5,-1.5) {};
  \node[left, align=right, inner sep=1pt, font=\normalsize]
    at (-0.2,-1.5) {#2\\[-1pt]{\footnotesize #3}};
\end{tikzpicture}%
}

\newcommand{\cellheatmapOneByThreeRaw}[1]{%
\begin{tikzpicture}[baseline=(gridcenter.base)]
  \foreach \i in {0,1,2}{
      \pgfmathparse{int(#1[\i])}
      \edef\val{\pgfmathresult}

      \ifnum\val=2 \def\colorcell{green} \fi
      \ifnum\val=1 \def\colorcell{improveGreen} \fi
      \ifnum\val=0 \def\colorcell{white} \fi
      \ifnum\val=-1 \def\colorcell{degradeRed} \fi
      \ifnum\val=-2 \def\colorcell{red} \fi

      \fill[\colorcell] (0,-\i) rectangle ++(1,-1);
      \draw[black, line width=0.1pt] (0,-\i) rectangle ++(1,-1);
  }

  \node (gridcenter) at (0.5,-1.5) {};
\end{tikzpicture}%
}

\newsavebox{\valuebox}
\newlength{\halfheight}

\newcommand{\indicatrice}[1]{\mathds{1}\{#1\}}

\newcommand{\gpt}{\texttt{gpt-oss-20b\xspace}}
\newcommand{\closedgpt}{\texttt{GPT-5.6 Luna\xspace}}
\newcommand{\deepseek}{\texttt{DeepSeek-R1-32B\xspace}}
\newcommand{\qwen}{\texttt{Qwen3-4B\xspace}}

\usepackage{todonotes}

\title{Jailbreaks for Black-Box Uncertainty Quantification in Large Reasoning Models}

\iclrfinalcopy
\author{
Lucas Biéchy\thanks{Corresponding author: \texttt{lucas.biechy@inria.fr}}$^{~~,1,2,3}$ \quad
Cédric Eichler$^{1,3,4,5}$ \quad
Adrien Boiret$^{1,3,4,5}$ \quad
Nicolas Anciaux$^{1,2,3}$ \\
$^1$Petscraft, Inria \quad
$^2$Université Paris-Saclay \quad
$^3$INSA CVL \quad
$^4$Université d'Orléans  \quad
$^5$LIFO
}

\begin{document}
\maketitle

\begin{abstract} 
While Large Reasoning Models (LRMs) excel at complex reasoning, alignment through reinforcement learning often induces systemic overconfidence. In production environments, where logits may be unavailable, robust black-box uncertainty quantification (UQ) is essential for trustworthiness and safety. Focusing on question-answering for LRMs, we show that existing black-box methods, such as paraphrase-based self-consistency and confidence verbalization, offer little to no improvement over simple repeated sampling, suggesting that alignment suppresses useful output variability. We introduce prompt-level relaxation operators that broaden the model's effective output distribution by approximating the effect of an optimal policy obtained with a stronger KL-regularization parameter, hence closer to the reference model. Theoretically, we demonstrate that relaxation improves calibration. We propose Jailbreak for Uncertainty (J4U), a jailbreak-derived technique for UQ that empirically reproduces the behavioral signatures predicted by our relaxation theory. Across 3 datasets and 4 LRMs, including a closed-source production model, J4U's improvement over repeated sampling achieves statistical significance in up to 6$\times$ more LRM–dataset-metric settings  than the strongest black-box UQ state-of-the-art baseline we evaluate, with average ECE reductions up to 5$\times$ larger. These results provide a practical tool for UQ in black-box LRM deployment.
\end{abstract}

\section{Introduction}\label{sec:intro}

The emergence of Large Reasoning Models (LRMs) has marked a significant paradigm shift from traditional Large Language Models (LLMs) \citep{XU2025101370}. Designed to explicitly develop intermediate reasoning before producing an answer, LRMs achieve unprecedented levels of accuracy across a wide range of tasks \citep{deepseek,openai2024gpt4technicalreport}. As a result, they are increasingly deployed as direct decision-makers, even in high-stakes domains, often framed as question-answering (QA) tasks~\citep{10.1093/jamiaopen/ooae154,doi:10.1056/AIp2300031,10.1007/978-3-032-01399-6_5}. In such settings, it is critical that the model’s confidence in its own decisions be rigorously calibrated to its true probability of being correct \citep{shorinwa2025survey}. 

However, like their predecessors, LRMs are often accessible as black-boxes \citep{wan20252025foundationmodeltransparency}, which limits applicable standard uncertainty quantification (UQ) approaches to prompt-level methods~\citep{shorinwa2025survey} such as confidence verbalization (e.g.~\citet{lin2022teaching,xiong2023can,yang2024verbalizedconfidencescoresllms}) and perturbation-based consistency (e.g.~\citet{portillo-wightman-etal-2023-strength,JiangCalibratingLM,pedapati2024large,yang2024just}). To date, the direct impact of the transition from LLMs to LRMs on confidence calibration in such settings remains poorly understood. It is therefore essential to investigate this configuration and evaluate the effectiveness of existing UQ methods within this new paradigm.

When evaluating state-of-the-art black-box UQ methods originally developed for LLMs on LRMs, we find that they are in fact comparable to a naive self-consistency baseline based on repeated sampling. In practice, LRMs inherit the well-documented overconfidence issues of LLMs \citep{epstein2025llmsoverconfidentevaluatingconfidence,groot-valdenegro-toro-2024-overconfidence}, resulting in limited predictive variability. This phenomenon is closely tied to the alignment phase, which aims to fulfill a twofold objective of maximizing complex reasoning capabilities, and preventing the generation of harmful or dangerous content~\citep{openai2025gptoss120bgptoss20bmodel,deepseek}. To meet this objective, models are extensively optimized using Reinforcement Learning (RL) \citep{schulman2017proximal}, and LRMs in particular rely heavily on RL to forge their reasoning behavior. Prior works suggest that such optimization can mechanically sharpen output probability distributions and favor more deterministic reasoning paths~\citep{openai2024gpt4technicalreport,ICLR2025_29fb6e14}. Moreover, our empirical observation shows that existing calibration methods often become indistinguishable from simple repeated sampling.

To overcome this limitation, we introduce a prompt-level relaxation operator. This input transformation acts as a restoring force toward the model’s pre-alignment base policy, prior to its RL-based alignment. From a theoretical perspective, we model the aligned model as a closed-form optimal policy \citep{rafailov2023direct}. Under this view, relaxation corresponds to approximating a policy trained with a higher KL-regularization parameter. We show that relaxation improves confidence  evaluation, i.e., it lowers Expected Calibration Error (ECE). 

Jailbreak methods were originally designed to bypass the safety side of alignment~\citep{10.5555/3666122.3669630,10.5555/3692070.3694246}. Intuitively, since these methods act against alignment-induced constraints, they are natural candidates to restore output variability that is suppressed by RL-based alignment. Building on this intuition, we propose \textsc{J4U} (Jailbreak for Uncertainty), a black-box self-consistency uncertainty estimator that repurposes jailbreak-style prompt transformations for UQ on strictly benign questions while sampling multiple traces. Leveraging the taxonomy introduced by \citet{shen2025safetyinstinctsllmslearn}, we instantiate one representative transformation from three categories (\textsc{J4U-Suffix}, \textsc{J4U-Prog}, and \textsc{J4U-Art}).

\noindent\textbf{Contribution.} Our contributions, in the context of black-box UQ, are as follows:
\begin{enumerate}
    \item We introduce \textbf{relaxation operators} for black-box LRMs and provide a theoretical calibration guarantee that relaxation improves calibration (lower ECE).
    \item We propose \textbf{\textsc{J4U}}, jailbreak-derived perturbations for UQ. Averaged over 4 LRMs and 3 datasets, the best J4U variant achieves 15.8\%, 14.5\%, and 4.8\% reduction of ECE, NLL, and Brier  over repeated stochastic sampling, vs 3.1\%, 2\%, and \textit{a $2\%$ increase} for the strongest evaluated state of the art approach.
    \item We show that \textbf{\textsc{J4U}} operators reproduce the observable signatures predicted by relaxation, namely flattening output distributions and diversification reasoning traces.

    \end{enumerate}

The remainder of the paper is organized as follows. Section~\ref{sec:rw} reviews related work and positions our proposal. Section~\ref{sec:background} introduces the problem statement and model. Section~\ref{sec:J4U} describes relaxation operators and their theoretical foundations. Section~\ref{sec:xp} introduces and empirically validates \textsc{J4U}. Section~\ref{sec:discu} provides concluding remarks and discusses limitations.

\section{Related Work}\label{sec:rw}

\noindent\textbf{Black-box uncertainty quantification for QA.} UQ methods aim to estimate how likely a model’s answer is correct~\cite{shorinwa2025survey}. Uncertainty is often decomposed into \textit{epistemic} (lack of knowledge) and \textit{aleatoric} uncertainty (inherent variability)~\citep{shorinwa2025survey}. In black-box QA, where only input/output texts access is available and outputs are often limited to a single clause, UQ methods mainly rely on \emph{prompt-level} procedures.

A natural \emph{baseline} is repeated sampling under stochastic decoding, using answer variability as a proxy for uncertainty~\citep{pmlr-v286-xiao25a,pedapati2024large,tao2025revisiting}. A  difficulty with this approach is that agreement between free-form responses is not trivial to assess. \textit{Semantic consistency} methods address this by measuring agreement at the level of meaning~\citep{kuhn2023semantic,manakul-etal-2023-selfcheckgpt,zhao2026seseblackboxuncertaintyquantification}. In the multiple-choice and exact-match settings we study, semantic clustering functionally reduces to the exact-match frequency of the final extracted answer. J4U is complementary to them, as it modifies the distribution from which answers are sampled rather than how agreement is measured. 

Beyond repeated sampling and its semantic extensions, prompt-level black-box UQ techniques for QA mainly fall into two families~\citep{shorinwa2025survey}. \emph{Perturbation-based consistency} approaches couple answer variability analysis with controlled input perturbations~\citep{portillo-wightman-etal-2023-strength,JiangCalibratingLM,pedapati2024large} (e.g. paraphrasing~\citep{yang2024just}), though this may change the effective task distribution and affect accuracy~\citep{WANG20251612}. \emph{Confidence verbalization} approaches prompt models  to explicitly report a confidence score~\citep{lin2022teaching,xiong2023can,yang2024verbalizedconfidencescoresllms}, often combined with multi-sample aggregation to reduce variance~\citep{xiong2023can}. These methods were developed and validated primarily on LLMs. On LRMs, whose predictive variability is reduced, we find they are rarely statistically distinguishable from repeated sampling in accuracy or calibration (Section~\ref{sec:xp}), motivating prompt-level mechanisms that restore usable uncertainty signals.

A distinct line of work builds on conformal prediction (CP), a distribution-free framework that converts any nonconformity or confidence score into prediction sets with finite-sample coverage guarantees~\citep{10.1561/2200000101}. Since CP is agnostic to how its input score is produced, our contribution is complementary rather than competing.

\noindent\textbf{Jailbreak prompting.}
Jailbreak attacks are prompts transformations designed to bypass safety behavior induced by alignment, including RL-based procedure~\citep{10.5555/3666122.3669630,10.5555/3692070.3694246}. Recent taxonomies on \emph{one-shot} black-box jailbreaks~\citep{shen2025safetyinstinctsllmslearn,yi2024jailbreakattacksdefenseslarge} identify a small number of categories, including (i) \emph{suffix/prefix injection} strategies, (ii) \emph{template-based} attacks (e.g.~\citet{10579515,li-etal-2024-drattack}), and (iii) \emph{format manipulations} that preserve the original intent of the prompt while altering its linguistic form (e.g.~\citet{andriushchenko2025does,DBLP:conf/naacl/DingKMCXCH24,jiang-etal-2024-artprompt}). These methods are typically evaluated for safety bypass on harmful prompts, rather than for their effect on predictive uncertainty under benign QA. A small body of work studies how adversarial prompting can \emph{manipulate apparent uncertainty} triggering over- or under-confidence~\citep{zeng2024uncertainty,obadinma2024calibration,obadinma2025robustness}. 

In contrast, our work uses jailbreak-style transformations on \emph{strictly benign} questions to counteract RL-induced sharpening and recover suppressed uncertainty signals for black-box UQ. To the best of our knowledge, our proposal is the first method that leverages jailbreaks to improve UQ.

\section{Problem Formulation and Preliminaries}\label{sec:background}

We consider a multiple choice QA task where $(X, Y) \in \mathcal{X} \times \mathcal{Y}$ is a pair of random variables following a joint distribution $\mathcal{D}$. The input space $\mathcal{X}$ represents possible questions, and $\mathcal{Y}$ a finite set of answers. Let $\mathcal{V}$ be a finite vocabulary of tokens, and $\mathcal{T} \subseteq \mathcal{V}^*$ the reasoning space of finite-length intermediate token sequences . For LRMs, we introduce a latent random variable $T \in \mathcal{T}$ representing the model's reasoning. We assume the existence of a deterministic mapping function $f: \mathcal{T} \rightarrow \mathcal{Y}$ such that for any realized reasoning $t \in \mathcal{T}$, the final predicted answer is uniquely identified as $f(t)$. We treat the model as a \emph{black box}: at inference time we may only submit prompts and access textual answers with no access to reasoning traces, logits, hyperparameters or hidden states. The key question is how to accurately quantify the confidence of the predicted answer $f(t)$ from this sampling interface alone, especially under RL-based alignment which often biases models toward overconfident predictions.

 In RL, reasoning traces are characterized by policy-induced probability distributions. We denote $\pi_T(\cdot \mid x)$ a conditional policy on the reasoning space $\mathcal{T}$ inducing a marginal distribution on $y \in \mathcal{Y}$:
$$
\pi_Y(y \mid x) = \sum_{t \in \mathcal{T}} \indicatrice{f(t)=y}~\pi_T(t \mid x)
$$
For simplicity's sake, we use $\pi$ to denote both $\pi_T$ and $\pi_Y$, arguments resolving any ambiguity.

\begin{definition}[Theoretical Mode]\label{def:theoretical_mode}
Given a policy $\pi$ and its induced marginal distribution over answers, we define the theoretical mode as the most probable answer. Formally,
$$
y^*_\pi(x) \coloneqq \arg\max_{y \in \mathcal{Y}} \pi(y \mid x)
$$
\end{definition}

To study how alignment affects confidence and calibration, we distinguish between a reference policy $\pi_{\text{ref}}$ (typically the pre-trained model) and an aligned policy $\pi_{\text{RL}}$ obtained through RL \citep{ouyang2022training}.  
RL is classically implemented via policy-gradient methods, most notably PPO \citep{schulman2017proximal} and more recently DPO \citep{rafailov2023direct}, GRPO \citep{shao2024deepseekmath}, and GSPO \citep{zheng2025group}.
These methods admit a common closed-form optimal policy \citep{rafailov2023direct, azar2024general, gx-chen2026klregularized}, which can be expressed as
\begin{equation}\label{eq:gibbs_policy}
\pi_\beta(t \mid x) = \frac{1}{Z_\beta(x)} \pi_{\text{ref}}(t \mid x) \exp\left( \frac{R(x, t)}{\beta} \right)
\end{equation}
where $\beta > 0$ is the KL-regularization parameter, $Z_\beta(x) = \sum_{t \in \mathcal{T}} \pi_{\text{ref}}(t|x) \exp(R(x,t)/\beta)$ denotes the partition function ensuring normalization, and $R(x, t)$ is the reward function assumed to be finite.

Given a policy and its induced marginal distribution over answers, it is essential to equip model predictions with a meaningful confidence measure. Because our access is black-box, confidence must be estimated from sampled answers. We consider the majority voting procedure, or self-consistency~\citep{wang2023selfconsistency}, over independent reasoning traces: 

\begin{definition}[Majority Response, Confidence Estimator]\label{def:majority_estimator}\label{def:confidence_estimator}
Let $T_1, \dots, T_K$ be $K$ i.i.d. samples from a policy $\pi(\cdot \mid x)$, with their corresponding answers $f(T_k)$. The final answer is given by the majority response
$$
\hat{Y}_\pi^{(K)}(x) \coloneqq \arg\max_{y \in \mathcal{Y}} \sum_{k=1}^K \indicatrice{f(T_k) = y}
$$
following~\citet{kuhn2023semantic}, its associated confidence is the frequency of that response:
$$
\hat{c}_\pi(x) \coloneqq \frac{1}{K} \sum_{k=1}^K \indicatrice{f(T_k) = \hat{Y}_\pi^{(K)}(x)}.
$$
\end{definition}

After defining a confidence estimator arises the question of its calibration, i.e. how well it reflects accuracy. We first introduce a binary random variable indicating prediction correctness.
\begin{definition}[Accuracy Indicator]\label{def:accuracy_indicator}
Let $(X, Y)$ be a question–answer pair and $\hat{Y}^{(K)}_\pi(X)$ the majority-voting prediction from $K$ samples under policy $\pi$. The \emph{accuracy indicator} $I_\pi$ is 
$$
I_\pi \coloneqq \indicatrice{\hat{Y}^{(K)}_\pi(X) = Y},
$$

\end{definition}
Let $\mathbb{P} = \mathcal{D} \otimes \pi_T^{\otimes K}$ denote the joint probability measure over $(X, Y, T_1, \dots, T_K)$.
A model is \textit{perfectly calibrated} if, for any confidence $c \in [0, 1]$, the conditional probability of success matches the reported confidence \cite{guo2017calibration}:
\begin{equation}\label{eq:calibrated}
\mathbb{P}(I_\pi = 1 \mid \hat{c}_\pi = c) = c
\end{equation}
where  $\mathbb{P}(I_\pi = 1 \mid \hat{c}_\pi = c)$ represents the accuracy of the model for a given confidence score $c$.

In the theoretical analysis, we assess calibration using ECE~\citep{naeini2015obtaining}, which homogeneously penalizes misalignment between predicted probabilities and empirical accuracy:
\begin{equation}
\label{eq:calibration_error}
\text{ECE}(\pi) \coloneqq \mathbb{E} \left[ \big| \mathbb{P}(I_\pi = 1 \mid \hat{c}_\pi) - \hat{c}_\pi \big| \right]
\end{equation}

Previous works report that RL induces systemic \textit{overconfidence} in model outputs \cite{groot-valdenegro-toro-2024-overconfidence, epstein2025llmsoverconfidentevaluatingconfidence}, formalized as follows:

\begin{definition}[Overconfidence]\label{def:overconfidence}
Let $I_{\pi}$ denote the accuracy indicator and $\hat c_\pi$ the confidence estimator for a response to input $x$. The model is said to be overconfident if, almost surely:
$$
\mathbb{P}(I_{\pi} = 1 \mid \hat{c}_{\pi}) < \hat{c}_{\pi}
$$
\end{definition}

Such overconfidence increases ECE and degrades calibration. Since we operate under black-box access, we act on the input: we seek a transformation of the input question that acts as an increase of the KL-regularization parameter, reverting $\pi_\text{RL}$ toward $\pi_\text{ref}$, thereby reducing overconfidence.

\section{Theoretical Relaxation for UQ}\label{sec:J4U}

In this section, we formalize \textit{relaxation operators}, transformations of the input query that emulate an increase in the KL-regularization parameter during RL, and demonstrate that they improve calibration. In Section~\ref{sec:xp}, we propose jailbreak-style prompts as candidate instantiations of this mechanism.

\begin{definition}[Relaxation Operator]\label{def:relaxation}
Let $\pi_{\text{ref}}$ be a pre-trained reference policy and $\pi_{\text{RL}}$ be a policy derived from $\pi_\text{ref}$ via RL.
A relaxation operator $j$ is a transformation mapping an input $x$ to a modified version $j(x)$ such that the induced policy $\pi_j(\cdot \mid x) \coloneqq \pi_{\text{RL}}(\cdot \mid j(x)) $ is equivalent to a closed-form optimal policy in Eq.~\ref{eq:gibbs_policy} with a higher KL-regularization parameter. I.e. $\exists (\beta_1, \beta_2)$ s.t.:
$$
\pi_j \equiv \pi_{\beta_1}, \quad \text{with} \quad \pi_{\text{RL}} \equiv \pi_{\beta_2} \quad \text{and} \quad \beta_1 > \beta_2 
$$
We denote by $\mathcal{J}$ the family of such relaxation operators.
\end{definition}

It immediately follows that applying a relaxation operator shifts the policy back towards $\pi_\text{ref}$:

\begin{proposition}\label{prop:KL_divergence}
Let $\pi_{\text{ref}}$ be a pre-trained reference policy and $\pi_{\text{RL}}$ be a policy derived from $\pi_\text{ref}$ via RL. Let $j \in \mathcal{J}$ a relaxation operator inducing a policy $\pi_j$, 
\begin{equation}
\label{eq:relaxation_prop}
\mathbb{D}_{\text{KL}}\big(\pi_j || \pi_{\text{ref}}\big) < \mathbb{D}_{\text{KL}}\big(\pi_{\text{RL}} || \pi_{\text{ref}}\big)
\end{equation}
where $\mathbb{D}_{\text{KL}}$ denotes the Kullback-Leibler divergence. 
\end{proposition}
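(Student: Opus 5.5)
By Definition~\ref{def:relaxation}, $\pi_j \equiv \pi_{\beta_1}$ and $\pi_{\text{RL}} \equiv \pi_{\beta_2}$ with $\beta_1 > \beta_2$. It therefore suffices to show that $g(\beta) \coloneqq \mathbb{D}_{\text{KL}}(\pi_\beta \| \pi_{\text{ref}})$ is strictly decreasing in $\beta$. I will work conditionally on a fixed $x$ (and then integrate over $x$, or state the result pointwise). I write $\lambda = 1/\beta$ and treat the Gibbs family of Eq.~\ref{eq:gibbs_policy} as an exponential family with natural parameter $\lambda$ and sufficient statistic $R(x,\cdot)$.

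\textbf{Step 1 (closed form).} Substituting Eq.~\ref{eq:gibbs_policy} into the KL divergence gives
\[
g = \mathbb{E}_{\pi_\beta}\!\left[\lambda R - \log Z_\beta\right] = \lambda\, \mu(\lambda) - A(\lambda),
\]
where $A(\lambda) = \log \sum_t \pi_{\text{ref}}(t\mid x) e^{\lambda R(x,t)}$ is the log-partition function and $\mu(\lambda) = A'(\lambda) = \mathbb{E}_{\pi_\beta}[R]$.

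\textbf{Step 2 (derivative).} Differentiating in $\lambda$,
\[
\frac{d g}{d\lambda} = \mu + \lambda A''(\lambda) - A'(\lambda) = \lambda\, \mathrm{Var}_{\pi_\beta}(R) \ge 0,
\]
using $A'' = \mathrm{Var}_{\pi_\beta}(R)$. Since $\lambda > 0$, $g$ is nondecreasing in $\lambda$, and hence nonincreasing in $\beta$. If $\mathrm{Var}_{\pi_\beta}(R) > 0$ for all $\beta$ in the interval, the mean value theorem gives $g(\beta_1) < g(\beta_2)$.

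\textbf{Main obstacle: justification and strictness.} The first issue is that $\mathcal{T}$ is countably infinite, so I must justify differentiating under the sum. I would assume $R$ is bounded (the paper only says "finite"), or at least that $Z_\beta$ is finite on a neighbourhood of $[\beta_2,\beta_1]$; then dominated convergence gives smoothness of $A$. The second issue is strictness. Since $\pi_\beta$ and $\pi_{\text{ref}}$ share the same support, $\mathrm{Var}_{\pi_\beta}(R) = 0$ if and only if $R(x,\cdot)$ is $\pi_{\text{ref}}$-a.s.\ constant. In that degenerate case $\pi_\beta = \pi_{\text{ref}}$ for every $\beta$, so both sides of Eq.~\ref{eq:relaxation_prop} vanish and only equality holds. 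I would therefore add the mild, implicit assumption that RL actually changes the policy ($\pi_{\text{RL}} \neq \pi_{\text{ref}}$), which is equivalent to $R(x,\cdot)$ being non-constant on the support of $\pi_{\text{ref}}$. Under that assumption the variance is strictly positive for every $\beta$, and strict inequality follows.

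Finally, whether the statement is read at the level of traces $\pi_T$ or answers $\pi_Y$ matters. The argument above is for $\pi_T$. For the answer marginals, monotonicity does not follow directly from data processing, because that would compare different pairs of distributions. I would therefore state the result for trace-level policies, which is how Definition~\ref{def:relaxation} is phrased.
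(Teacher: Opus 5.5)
Your proof is correct and is essentially the paper's argument: the appendix also expands $\mathbb{D}_{\text{KL}}(\pi_\beta \| \pi_{\text{ref}}) = \mathbb{E}_{\pi_\beta}[R/\beta - \log Z_\beta]$ and differentiates it to get $-\mathrm{Var}_{\pi_\beta}(R)/\beta^3$. That is exactly your $\lambda\,\mathrm{Var}_{\pi_\beta}(R)$ after the chain rule $d\lambda/d\beta = -1/\beta^2$; the exponential-family parameterization only streamlines the computation, and the paper uses the same assumptions ($R$ bounded, and non-constant on the support of $\pi_{\text{ref}}$ for strictness) at the trace level, as you do.
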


\begin{proof}[Proof Sketch]
Full proof in Appendix \ref{app:prop_proof}. $\pi_\beta$ is the unique solution to the KL-regularized objective $\min_{\pi} \mathcal{L}(\pi) = \mathbb{E}_\pi[-R(x,t)] + \beta \mathbb{D}_{\text{KL}}(\pi \parallel \pi_{\text{ref}})$. As $\beta$ increases, the penalty on the KL term grows relative to the reward. By constrained optimization, an increase in $\beta$ pulls the optimal solution closer to $\pi_{ref}$  in terms of KL divergence. Since $j$ induces $\beta_1 > \beta_2$, the inequality \eqref{eq:relaxation_prop} holds.
\end{proof}

Since overconfidence arises from excessive deviation from $\pi_{ref}$, this reversion property suggests improved calibration. We now formalize this intuition by showing ECE reduction.

\begin{theorem}\label{thm:relaxation_ece}
Consider a policy $\pi_{\text{RL}}  \equiv \pi_{\beta_2}$ an overconfident policy that underwent aggressive RL-based alignment, i.e.\ $\beta_2$ is small relative to reward differences.
There exists a $\beta'>\beta_2$ such that for any relaxation operator $j \in \mathcal{J}$
inducing a policy $\pi_j  \equiv \pi_{\beta_1} $, if $\beta_1\in(\beta_2,\beta']$ then the following holds for a finite number of $K$ traces samples:
\begin{equation}\label{thm:finite}
\text{ECE}(\pi_j) \le \text{ECE}(\pi_{\text{RL}}) + \mathcal{O}\left( \sqrt{\frac{\log |\mathcal{Y}|}{K}} \right)
\end{equation}
Furthermore, in the asymptotic limit where $K \to \infty$, the relaxation strictly reduces ECE:
\begin{equation}\label{thm:asymptotique}
\text{ECE}(\pi_j) < \text{ECE}(\pi_{\text{RL}})
\end{equation}
\end{theorem}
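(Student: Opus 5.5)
I will first prove the asymptotic statement \eqref{thm:asymptotique} and then deduce the finite-$K$ bound \eqref{thm:finite} by a concentration argument.

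\emph{The limit $K\to\infty$.} By the strong law of large numbers, for each $x$ the empirical answer frequencies converge almost surely to $\pi_\beta(\cdot\mid x)$. Assuming a unique mode, $\hat Y^{(K)}_{\pi_\beta}(x)\to y^*_{\pi_\beta}(x)$ (Definition~\ref{def:theoretical_mode}) and $\hat c_{\pi_\beta}(x)\to p_\beta(x)\coloneqq \max_y \pi_\beta(y\mid x)$. Hence
\[
\text{ECE}_\infty(\pi_\beta)=\mathbb{E}\big[\,|a_\beta(p_\beta(X)) - p_\beta(X)|\,\big],
\]
where $a_\beta(c)=\mathbb{P}(y^*_{\pi_\beta}(X)=Y\mid p_\beta(X)=c)$.

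Next I use the Gibbs form \eqref{eq:gibbs_policy}. Write $\pi_\beta(y\mid x)\propto \sum_{t:f(t)=y}\pi_{\text{ref}}(t\mid x)e^{R(x,t)/\beta}$. Two consequences follow:
\begin{itemize}
\item For $\beta$ in a small neighbourhood $(\beta_2,\beta']$ the argmax $y^*$ does not change. The answer masses are continuous in $\beta$, and the mode has a strict margin at $\beta_2$; the aggressive-alignment assumption gives this margin, since $\pi_{\beta_2}$ is nearly a point mass on the reward-maximizing answer. The accuracy indicator is therefore unchanged, so $a_\beta$ is the same function of $x$ for all such $\beta$.
\item The confidence $p_\beta(x)$ is nonincreasing in $\beta$. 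Its derivative is $-\beta^{-2}\,\text{Cov}_{\pi_\beta}\big(R,\ \indicatrice{f(T)=y^*}\big)$, which is nonnegative in absolute value when the mode collects the high-reward traces, as it does in the small-$\beta_2$ regime. It is strictly decreasing on the set where $\pi_{\beta_2}$ is not degenerate.
\end{itemize}

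With accuracy fixed and confidence lowered, overconfidence (Definition~\ref{def:overconfidence}) gives $a<p_{\beta_2}$ almost surely. I then choose $\beta'$ so that $p_{\beta'}(x)\ge a(x)$ still holds. This is possible by continuity, taken pointwise and then uniformly via finiteness or compactness assumptions on $\mathcal{X}$. It yields, pointwise,
\[
|a-p_{\beta_1}| = p_{\beta_1}-a < p_{\beta_2}-a.
\]
Here the conditional accuracy is computed by conditioning on confidence level sets. These sets may merge or split under the change of $\beta$, so I will actually bound $|\mathbb{E}[I\mid p]-p|$ through the tower property and Jensen's inequality, using $\mathbb{E}[I\mid p]\le p$ on both sides. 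Taking expectations then gives the strict inequality \eqref{thm:asymptotique}.

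\emph{The finite-$K$ bound.} I decompose
\[
\text{ECE}_K(\pi)\le \text{ECE}_\infty(\pi)+\mathbb{E}\,|\hat c_\pi - p_\pi| + \mathbb{E}\,|\mathbb{P}(I=1\mid\hat c)-\mathbb{P}(I_\infty=1\mid p)|.
\]
The deviation terms are controlled by a DKW or Hoeffding bound combined with a union bound over $|\mathcal{Y}|$ answers, which gives $\max_y|\hat\pi(y)-\pi(y)|=\mathcal{O}_P\big(\sqrt{\log|\mathcal{Y}|/K}\big)$. The probability that the majority vote differs from the mode is exponentially small given the margin, and is absorbed into the same rate. Applying the decomposition to $\pi_j$, applying the reverse inequality to $\pi_{\text{RL}}$, and inserting the asymptotic inequality yields \eqref{thm:finite}.

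\emph{Main obstacle.} The hard step is the monotonicity and no-overshoot argument. Showing that $p_\beta$ decreases requires the reward covariance with the mode indicator to have a sign, which I will derive from the small-$\beta_2$ assumption. Handling the conditional-on-confidence structure of ECE rigorously, rather than pointwise, requires the Jensen/tower step and an assumption that overconfidence holds conditionally. If the level sets cause trouble, I would first try restating ECE pointwise as $\mathbb{E}[p-I]$ under overconfidence, which is linear and avoids conditioning entirely.
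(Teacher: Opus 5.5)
The monotone part of your argument matches the paper's. Your covariance formula $\partial_\beta p_\beta=-\beta^{-2}\,\mathrm{Cov}_{\pi_\beta}\big(R,\indicatrice{f(T)=y^*}\big)$ is exactly the paper's $\frac{\pi_\beta(y^*\mid x)}{\beta^2}\big(\mathbb{E}[R]-\mathbb{E}[R\mid y^*]\big)$, and mode stability with an exponentially small change in accuracy is the same argument.

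The gap is in the step where you depart from the paper: deriving by continuity that $\pi_{\beta_1}$ is still overconfident (``no overshoot''). Linearizing ECE at $\beta_1$ requires $\mathbb{P}(I=1\mid p_{\beta_1}(X))\le p_{\beta_1}(X)$. That is a statement about the level sets of the \emph{new} confidence, and pointwise continuity of $\beta\mapsto p_\beta(x)$ says nothing about those level sets.

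Take two equally likely questions with $p_{\beta_2}=0.9$ for both, one answered correctly and one not. Then $\pi_{\beta_2}$ is overconfident, with accuracy $0.5$ on the level set $\{0.9\}$. Now let $\beta_1>\beta_2$ separate them, say to $0.89$ and $0.88$. The correct question forms its own level set with accuracy $1>0.89$, and the asymptotic ECE rises from $0.4$ to $0.495$. This happens even though the mode, the accuracy, and the monotonicity of $p_\beta$ all behave as you want.

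Your tower/Jensen step goes through only under the extra assumption you flag, overconfidence conditional on $X$, i.e.\ $\mathbb{P}(Y=y^*(X)\mid X)<p_{\beta_2}(X)$ a.s. That is not Definition~\ref{def:overconfidence}. When each question has a single correct answer, this probability lies in $\{0,1\}$ while $p_{\beta_2}<1$, so the assumption forces $\mathbb{P}(Y=y^*(X)\mid X)=0$ a.s. The theorem would then cover only a model that is wrong almost everywhere.

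The paper does not derive preservation of overconfidence; it builds it into the aggressive-alignment hypothesis: small relaxations keep the mode \emph{and} keep an overconfident regime. Under that hypothesis, the identity you list as a fallback, $\text{ECE}(\pi)=\mathbb{E}[\hat c_\pi]-\mathbb{P}(I_\pi=1)$, holds for both policies at the given $K$. That identity is the whole proof, so make it your main route.

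It also repairs your finite-$K$ step. Your third term, $\mathbb{E}\,\big|\mathbb{P}(I=1\mid\hat c)-\mathbb{P}(I_\infty=1\mid p)\big|$, compares conditional expectations on different $\sigma$-fields, and Hoeffding plus a union bound does not control it. Proceed instead as follows:
\begin{enumerate}
\item Write $\mathbb{E}[\hat c_\pi]=\mathbb{E}[p^*]+\text{bias}_K$ with $0\le\text{bias}_K\lesssim\sqrt{\log|\mathcal{Y}|/K}$. The lower bound holds because the expectation of a maximum dominates the maximum of expectations; the upper bound follows from the sub-Gaussian maximal inequality.
\item Use $|\Delta\text{Acc}|=\mathcal{O}(e^{-cK})$, which follows from the unchanged mode and a uniform margin.
\item Since $\text{bias}_K(\pi_{\text{RL}})\ge 0$, conclude $\text{ECE}(\pi_j)-\text{ECE}(\pi_{\text{RL}})\le\Delta p^*+\text{bias}_K(\pi_j)+|\Delta\text{Acc}|$.
\end{enumerate}
This gives the finite bound because $\Delta p^*\le 0$. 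It gives the strict asymptotic inequality because $\Delta p^*<0$ while the other two terms vanish. There is no need to compare $\text{ECE}_K$ with $\text{ECE}_\infty$.
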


\begin{proof}[Proof Sketch]
Complete proof in Appendix~\ref{app:thm_proof}.
Under RL overalignment, there are relaxations that will smooth rewards without changing the theoretical mode, which locally maximize the reward. 
In such cases,
we show that calibration improves under relaxation by first decomposing the ECE in an overconfident regime as $\text{ECE}(\pi) = \mathbb{E}[\hat{c}_\pi] - \mathbb{P}(I_\pi = 1)$. 
Using a standard maximal inequality for sub-Gaussian variables \citep{boucheron2013inequalities}, we bound the finite-sample bias of the confidence estimator $\hat{c}_\pi$ by $\mathcal{O}\left(\sqrt{\log |\mathcal{Y}|/K}\right)$, which vanishes as the number of samples $K \to \infty$. 
For an optimal policy $\pi_\beta$, the derivative of the theoretical mode probability $y^*_\pi$ with respect to  $\beta$ satisfies $\partial_\beta \pi_\beta(y^*_\pi|x) \propto (\mathbb{E}[R] - \mathbb{E}[R|y^*_\pi])$. 
Since under RL overalignment, this $y^*_\pi$ corresponds to the mode maximizing the expected reward, $\mathbb{E}[R\vert{}y^*_\pi] \ge \mathbb{E}[R]$, making this derivative non-positive.
Consequently, increasing $\beta$ reduces the expected confidence and thereby lowering the ECE.
\end{proof}

\section{Jailbreak-style transformations for UQ: Calibration and Behavioral evidence of relaxation}\label{sec:xp}

In this section, we introduce \textsc{J4U} operators, evaluate their accuracy and calibration against state-of-the-art baselines, then test whether their behavior matches the relaxation-operator predictions

\subsection{\textsc{J4U} as candidate instantiations of relaxation}

Jailbreaks are designed to bypass alignment-induced constraints while preserving prompts' semantic~\citep{zhou2023lima,10.5555/3666122.3669630,10.5555/3692070.3694246}. They tend to elicit responses that more closely reflect the model's pre-trained knowledge distribution rather than its safety-tuned persona, making them inherently good candidates for relaxation.

We operationalize this intuition through a stochastic operator $j$ that transforms an input $x$ into a jailbreak-style variant $j(x)$ designed to (i) preserve the payload of 
$x$ and (ii) mitigate alignment-induced constraints,  thereby increasing response diversity and reducing overconfident mass concentration. We estimate uncertainty through $K$ sampling of such transformations $j(x)$. We instantiate $j$ using three representative one-shot black-box jailbreak techniques, spanning major attack classes identified by~\citet{shen2025safetyinstinctsllmslearn} and~\citet{li-etal-2024-drattack}. Each instantiation defines a distribution over perturbations, from which we sample independently at each query.
\begin{itemize}

        \item \textbf{\textsc{J4U-Suffix}}: a random character suffix is appended to the prompt. It is a simplification of gradient-based discrete optimization methods~\citep{shen2025safetyinstinctsllmslearn}, specifically  universal suffix attacks~\citep{zou2023universaltransferableadversarialattacks},  representative of ``Suffix-based'' attacks~\citep{li-etal-2024-drattack}.
        \item \textbf{\textsc{J4U-Art}~\citep{jiang-etal-2024-artprompt}}: a random word is replaced by its ASCII-art equivalent. It is representative of ``Semantic obfuscation'' attacks where prompts are perturbed through non-standard formatting or localized obfuscations.
        \item \textbf{\textsc{J4U-Prog}~\citep{10579515}}: the question is split into subsections reconstructed by the LRM before answering. It is representative of ``Template-based restructuring'' attacks where prompts are embedded into structured templates or multi-step reasoning frameworks.
\end{itemize}

\subsection{Experimental setup}

\noindent\textbf{Evaluated black-box UQ methods.} We conduct empirical evaluation using the \textsc{J4U}  instantiations introduced above (\textsc{J4U-Suffix}, \textsc{J4U-Art}, \textsc{J4U-Prog}), two state-of-the-art black-box UQ approaches and a simple baseline:

\begin{itemize}

\item \textbf{\textsc{RS}  (repeat-sampling baseline)}: We compare all approaches with regard to a simple repeated-sampling baseline, representing $\pi_\text{RL}$, where the input is repeated.

\item \textbf{\textsc{VC} (verbalized confidence)}: Following the Avg-Conf aggregation strategy and self-random and self-probing settings  of~\citet{xiong2023can}, the model is queried to produce an answer and a separate instance is then prompted to produce a confidence score for said question and answer. The confidence score is a normalized average over several iterations.

\item \textbf{\textsc{Rephrase} (self-consistency with rephrasing)}: Several prior approaches introduce prompt perturbations to increase variance in self-consistency. We adopt the method of \citet{yang2024just}, which rephrases prompts to induce diverse generations. 

\end{itemize}

\noindent\textbf{LRMs.}
Experiments are conducted on \gpt~\citep{openai2025gptoss120bgptoss20bmodel}, \qwen~\citep{yang2025qwen3technicalreport}, \deepseek~\citep{deepseek}, and \closedgpt~\citep{openai2026gpt56}, a proprietary LRM from OpenAI's GPT-5.6 frontier-model family. We select three state-of-the-art open-weight LRMs for reproducibility, plus GPT-5.6 Luna to test generalization to production settings.

\noindent\textbf{Datasets.} We evaluate our method and baselines on three datasets. Two are standard QA benchmarks designed to assess factual knowledge and reasoning, \textbf{SuperGPQA}~\citep{du2025supergpqa}  and \textbf{Humanity’s Last Exam (HLE)}~\citep{hle}. The third, \textbf{SynthPAI}~\citep{yukhymenko2024synthetic}, consists of synthetic Reddit posts typically used to evaluate attribute inference attacks~\citep{staab24beyond}, a task with significant aleatoric uncertainty where the goal is to infer implicit attributes of a text’s author.  While we focus on QA, prior works have shown that QA formulation can cover a broad range of NLP phenomena (e.g., \citet{mccann2018naturallanguagedecathlonmultitask}, \citet{khashabi-etal-2020-unifiedqa}).

\noindent\textbf{Confidence calibration metrics.} While our theoretical analysis focuses on ECE~\citep{naeini2015obtaining}, 
we extend the empirical evaluation to NLL~\citep{fisher1922mathematical} and the Brier score~\citep{glenn1950verification}.  
NLL emphasizes the model’s confidence near certainty, penalizing overconfident errors, 
while the Brier score captures overall discrepancy between predicted probabilities and outcomes.  
Together with ECE, these metrics provide a complementary view of calibration.

\noindent\textbf{Sampling parameters and statistical setting.} We report one-shot results aggregated over $K=10$ samples per question for each UQ method (see Appendix~\ref{app:confidencePerIteration} for a study varying $K$). We note that all evaluated methods rely on multiple sampling rounds. The cost analysis in Appendix~\ref{app:cost} shows that \textsc{J4U} is substantially more efficient than \textsc{VC} and \textsc{Rephrase}, both asymptotically and empirically, in terms of API calls, token consumption, and wall-clock execution time. Statistical significance is assessed with a two-sided bootstrap test on the mean difference (method minus \textsc{RS}), using $\alpha=0.05$. We perform the test separately for each (LRM, dataset) pair and metric. For more details on the experimental setup and prompts, see Appendix~\ref{app:detailed_XP} and~\ref{app:prompt}.

\subsection{J4U Improves Confidence Calibration}

\begin{table}[t]
\centering
\caption{
Comparison versus \textsc{RS} (across 12 LRM-dataset pairs). Left: relative (\%) and absolute ($\Delta$) average differences; bold values denote the best-performing approach per metric.
Variation per LRMs and dataset is encoded as a 4×3 heatmap (rows = LRMs, columns = datasets): green = improvement, red = degradation; darker shades indicate statistical significance.  
}

\label{tab:summary_percentage_metrics}

\setlength{\tabcolsep}{1pt}

\begin{tabular}{l| r r r r r}
\toprule
 \multicolumn{1}{c}{}& \multicolumn{1}{c}{\textsc{VC}} & \multicolumn{1}{c}{\textsc{Rephrase}}& \multicolumn{1}{c}{\textsc{J4U-Suffix}} & \multicolumn{1}{c}{\textsc{J4U-Prog}} & \multicolumn{1}{c}{\textsc{J4U-Art}} \\
\multicolumn{1}{c}{} & \multicolumn{1}{c}{\small ~\citep{xiong2023can}} & \multicolumn{1}{c}{\small~\citep{yang2024just}}  & \multicolumn{1}{c}{(ours)} & \multicolumn{1}{c}{(ours)} & \multicolumn{1}{c}{(ours)} \\
\midrule
Acc.$\uparrow$     
& \cellheatmap{{1,0,-1,-1,0,0,0,1,1,-1,-1,-1}}{-1.2\%}{-0.005 $\Delta$} 
& \cellheatmap{{0,-1,1,-1,-1,-1,-1,-2,1,-1,0,-1}}{+1.4\%}{-0.025 $\Delta$} 
& \cellheatmap{{0,-1,1,-1,-1,1,0,0,1,-1,0,-1}}{+2.8\%}{\textbf{-0.001} $\Delta$} 
& \cellheatmap{{1,-1,-1,-1,-1,-1,1,-1,1,-1,-1,-1}}{-1.3\%}{-0.017 $\Delta$} 
& \cellheatmap{{1,-1,0,-1,-1,1,1,-1,2,0,-1,-1}}{\textbf{+7.0\%}}{-0.008 $\Delta$}  \\
ECE$\downarrow$    
& \cellheatmap{{1,-1,-1,-1,-1,-1,-1,-1,0,-1,-1,-2}}{+8.6\%}{+0.035 $\Delta$} 
& \cellheatmap{{1,0,1,0,2,0,-1,1,2,-1,0,-1}}{-3.1\%}{-0.018 $\Delta$}
& \cellheatmap{{1,-1,-1,1,0,1,1,1,1,1,0,-1}}{-2.9\%}{-0.008 $\Delta$}
& \cellheatmap{{2,2,2,1,2,1,2,1,2,1,1,1}}{\textbf{-15.8\%}}{\textbf{-0.068} $\Delta$}
& \cellheatmap{{2,2,2,1,1,2,2,-1,2,2,1,1}}{-13.8\%}{-0.066 $\Delta$} \\
Brier$\downarrow$  
& \cellheatmap{{1,0,-1,-1,-1,-1,0,0,-1, -1, 0, -1}}{+3.5\%}{+0.008 $\Delta$}
& \cellheatmap{{1,0,1,-1,0,0,0,-2,1,-1,0,0}}{+2.0\%}{+0.002 $\Delta$}
& \cellheatmap{{1,0,-1,1,0,0,1,0,1,1,0,0}}{-1.7\%}{-0.006 $\Delta$}
& \cellheatmap{{2,1,2,0,0,1,1,-1,2,1,0,0}}{-4.4\%}{-0.015 $\Delta$}
& \cellheatmap{{2,0,2,1,0,2,2,-1,2,1,0,0}}{\textbf{-4.8\%}}{\textbf{-0.018} $\Delta$} \\
NLL$\downarrow$    
& \cellheatmap{{-1,-1,1,-1,1,-1,-1,-1,1,-1,-1,-1}}{+5.0\%}{+0.55 $\Delta$}
& \cellheatmap{{1,-1,1,1,1,-1,2,-2,1,-1,1,1}}{-2.0\%}{-0.24 $\Delta$}
& \cellheatmap{{2,-1,1,2,-1,1,1,-1,1,2,1,-2}}{-6.4\%}{-0.30 $\Delta$} 
& \cellheatmap{{2,1,2,1,1,1,2,-1,1,1,1,1}}{-14.4\%}{-1.73 $\Delta$}
& \cellheatmap{{2,-1,2,1,1,1,2,-1,2,2,1,1}}{\textbf{-14.5\%}}{\textbf{-1.80} $\Delta$} \\
\bottomrule
\end{tabular}
\vspace{-0.2cm}
\end{table}
In this subsection, we compare the relative accuracy and confidence calibration of \textsc{J4U} and state-of-the-art approaches with reference to $\pi_\text{RL}$ induced by \textsc{RS} (repeated sampling baseline). 

Table~\ref{tab:summary_percentage_metrics} reports the average gain in accuracy and confidence calibration over \textsc{RS} at $K=10$ for each method. Bold values indicate the best-performing method for a given metric. Each value is accompanied by a $4 \times 3$ heatmap summarizing results across LRM–dataset combinations: rows correspond to LRMs (from top to bottom: \qwen, \gpt, \deepseek, \closedgpt) and columns to datasets (from left to right: SynthPAI, SuperGPQA, HLE). Each cell represents the relative deviation from \textsc{RS}, with green indicating improvement and red indicating degradation. Darker shades denote statistically significant differences as determined by a bootstrap mean difference test ($\alpha = 0.05$). Across 12 LRM-dataset pairs and 3 calibration metrics (ECE, Brier, NLL), this yields 36 evaluation settings. Results per dataset and LRM are reported in Appendix~\ref{app:detailacc}.

\noindent\textbf{Result 1: Observed gains of \textsc{VC} and \textsc{Rephrase} relative to \textsc{RS} are limited in our settings.} 
\textsc{VC} shows no statistically significant improvement over \textsc{RS} in any of the calibration evaluations and yields average degradations on each calibration metric.  
\textsc{Rephrase} yields statistically significant improvements in 3 of 36 calibration evaluations and statistically significant degradations in 3 others. On average, \textsc{Rephrase} reduces ECE by 3.1\% and NLL by 2\%, while it increases Brier by 2\%.

\noindent\textbf{Result 2: \textsc{J4U-Art} and \textsc{J4U-Prog} improve calibration in a larger fraction of settings than baselines, and by a wider margin, without significant accuracy loss.} 
\textsc{J4U-Prog} and \textsc{J4U-Art} achieve statistically significant ECE improvements in $6$ and $7$ of the 12 LRM–dataset pairs, respectively. Across all calibration metrics, \textsc{J4U-Art} shows statistically significant improvement in $18/36$ cases and \textsc{J4U-Prog} in $12/36$ cases.
No jailbreak transformation yields a statistically significant accuracy decrease, and \textsc{J4U-Art} significantly improves accuracy on \deepseek-HLE. Relative accuracy gains are driven by HLE, where the \textsc{RS} baseline accuracy is low; in absolute terms, all methods remain within 0.025 of \textsc{RS}.

Beyond occurring more often, \textsc{J4U-Art}'s and \textsc{J4U-Prog}'s gains are also substantially larger in magnitude: on average, \textsc{J4U-Prog} reduces ECE by 15.8\% (13.8\% for \textsc{J4U-Art}), while \textsc{J4U-Art} achieves the largest reduction in Brier (4.8\%) and NLL (14.5\%) (4.4\% and 14.4\% for \textsc{J4U-Prog}).

\textit{We next investigate whether these empirical findings are consistent with the relaxation-based interpretation, by testing observable implications motivated by Theorem~\ref{thm:relaxation_ece}.}

\subsection{ Relaxation-operator predictions and observed behaviors in \textsc{J4U}}

Theorem~\ref{thm:relaxation_ece} relies on 2 assumptions: (i) $\pi_\text{RL}$ exhibits systemic overconfidence and is the result of over-aggressive RL, and (ii) $j$ is a relaxation operator. By hypothesis, we assume that LRMs undergo aggressive RL and, in line with prior studies~\citep{epstein2025llmsoverconfidentevaluatingconfidence,groot-valdenegro-toro-2024-overconfidence,he2023investigating}, we find that tested LRMs are overconfident --assumption (i)-- (Appendix~\ref{app:overconfidence}).

Hence, we seek to assess whether jailbreaks transformations exhibit observable behaviors consistent with relaxation --assumption (ii)--. Since relaxation cannot be verified directly, we test two observable implications of Definition~\ref{def:relaxation}: 
(1) a flattening of the predictive distribution (higher Shannon entropy) and 
(2) an increase in the semantic diversity in reasoning traces (lower cosine similarity).

\noindent\textbf{Implication 1: Flattening the predictive distribution.} 
As shown in the monotonicity proof (Appendix \ref{proof:monotonie}), the probability of the theoretical mode $\pi(y^*_\pi \mid x)$ is a strictly decreasing function of $\beta$. Since the distribution remains normalized, this probability mass is necessarily redistributed toward the tails of the distribution. This leads to a flatter probability vector, which directly results in a higher Shannon entropy. Hence, if \textsc{J4U} instantiations act as relaxation operators, we expect derived policies to  shift back toward the higher-entropy reference policy $\pi_{\text{ref}}$.

\begin{table}[t]
\centering
\caption{Increase in entropy vs $\pi_\text{RL}$, in relative change (in \%) and  absolute difference ($\Delta$).}
\small
\setlength{\tabcolsep}{4pt}
\begin{tabular}{l ccc}
\toprule
LRM & \textsc{J4U-Suffix} & \textsc{J4U-Prog} & \textsc{J4U-Art} \\
\midrule
\qwen
& \makecell{+7.6\% \scriptsize +0.015 $\Delta$}
& \makecell{+102.9\% \scriptsize +0.308 $\Delta$}
& \makecell{+72.5\% \scriptsize +0.210 $\Delta$} \\
\gpt
& \makecell{+9.2\% \scriptsize +0.030 $\Delta$}
& \makecell{+39.9\% \scriptsize +0.155 $\Delta$}
& \makecell{+17.4\% \scriptsize +0.069 $\Delta$} \\
\deepseek
& \makecell{+13.2\% \scriptsize +0.035 $\Delta$}
& \makecell{+57.7\% \scriptsize +0.158 $\Delta$}
& \makecell{+38.6\% \scriptsize +0.106 $\Delta$} \\
\closedgpt
& \makecell{+17.0\% \scriptsize +0.018 $\Delta$}
& \makecell{+57.9\% \scriptsize +0.109 $\Delta$}
& \makecell{+62.4\% \scriptsize +0.114 $\Delta$} \\
\midrule
Global Mean
& \makecell{+11.8\% \scriptsize +0.024 $\Delta$}
& \makecell{+64.6\% \scriptsize +0.182 $\Delta$}
& \makecell{+47.7\% \scriptsize +0.125 $\Delta$} \\
\bottomrule
\end{tabular}
\label{tab:summary_entropy_combined}
\end{table}

Table~\ref{tab:summary_entropy_combined} provides a summary of the increase in Shannon entropy induced by \textsc{J4U} relative to $\pi_\text{RL}$, the policy defined by \textsc{RS}. We report both absolute and relative average increase per LRM. Detailed results are given in Appendix~\ref{app:entropy}.

\noindent\textbf{Result 3.1: \textsc{J4U} methods shift $\pi_\text{RL}$ toward flatter predictive probability distributions.}
Across all LRMs and datasets, \textsc{J4U} consistently increases entropy, producing higher predictive variability than \textsc{RS}. The magnitude of the shift is highly dependent on the transformation. While \textsc{J4U-Suffix} yields a modest average entropy increase of +11.8\%,  \textsc{J4U-Prog} and \textsc{J4U-Art} induce much more significant shifts, with average increases of +64.6\% and +47.7\%, respectively. This is consistent with previous results showing that \textsc{J4U-Suffix} is less effective.

The reaction to each method varies by LRM. \gpt~appears least affected, which is consistent with the smaller confidence calibration gains achieved by our methods on this model (see Appendix~\ref{app:detailacc}). In contrast, \closedgpt~appears comparatively more responsive to \textsc{J4U}, reaching roughly a +60\% increase for \textsc{J4U-Prog} and \textsc{J4U-Art}. It trails only \qwen, which appears particularly sensitive to \textsc{J4U-Prog}, showing a +102.9\% increase in entropy. 

\textit{These results provide a first layer of empirical evidence that the
observable consequences of \textsc{J4U} match those predicted for a relaxation
operator. This
is further supported in what follows.}

\noindent\textbf{Implication 2: Increasing semantic variability in reasoning traces.}
A more direct consequence of increasing the KL-regularization parameter $\beta$ in the closed-form optimal policy (Eq.~\ref{eq:gibbs_policy}) is a rise in semantic diversity of the generated reasoning traces. This effect reflects the fundamental \textit{exploration–exploitation} trade-off: by penalizing excessive concentration on narrow modes of the reward function $R(x,t)$, stronger regularization preserves the entropy of the reference distribution $\pi_{\text{ref}}$. As a result, higher regularization promotes a broader exploration of the semantic solution space. Reasoning traces are not accessible under our theoretical black-box model and are not required by \textsc{J4U}; we analyze them here solely to better understand its empirical behavior. \closedgpt~was excluded from this experiment as its reasoning traces are not accessible.

To quantify this effect, we compute the average pairwise cosine similarity between reasoning trace embeddings generated using \texttt{Qwen3-Embedding-8B}
~\citep{qwen3embedding}, without any system prompt. We report both absolute and relative variations with respect to $\pi_\text{RL}$, the policy induced by \textsc{RS}. A summary is provided in Table~\ref{tab:summary_cosine_combined}, with full results deferred to Appendix~\ref{app:cot}.

\begin{table}[]
\centering
\caption{Reduction in reasoning traces cosine similarity vs $\pi_\text{RL}$, in relative change (in \%) and absolute difference ($\Delta$), with the exclusion of \closedgpt~(reasoning traces are not accessible).}
\small
\setlength{\tabcolsep}{4pt}
\begin{tabular}{l ccc}
\toprule
LRM & \textsc{J4U-Suffix} & \textsc{J4U-Prog} & \textsc{J4U-Art} \\
\midrule
\qwen
& \makecell{-5.7\% \scriptsize -0.037 $\Delta$} 
& \makecell{-16.8\% \scriptsize -0.113 $\Delta$} 
& \makecell{-28.8\% \scriptsize -0.187 $\Delta$} \\
\gpt
& \makecell{-1.1\% \scriptsize -0.013 $\Delta$} 
& \makecell{-22.3\% \scriptsize -0.183 $\Delta$} 
& \makecell{-31.5\% \scriptsize -0.250 $\Delta$} \\
\deepseek
& \makecell{-3.7\% \scriptsize -0.017 $\Delta$} 
& \makecell{-4.8\% \scriptsize -0.037 $\Delta$} 
& \makecell{-23.7\% \scriptsize  -0.147 $\Delta$} \\
\midrule
Global Mean
& \makecell{-3.5\% \scriptsize -0.022 $\Delta$} 
& \makecell{-14.6\% \scriptsize -0.111 $\Delta$} 
& \makecell{-28.0\% \scriptsize -0.195 $\Delta$} \\
\bottomrule
\end{tabular}
\label{tab:summary_cosine_combined}
\end{table}

\noindent\textbf{Result 3.2: Jailbreaks increase reasoning traces semantic diversity.} 
Consistent with stronger regularization, \textsc{J4U} reduces cosine similarity across all LRMs, indicating increased semantic dispersion in the generated reasoning traces. In line with previous observations, \textsc{J4U-Suffix} induces only limited variability (-3.5\% on average), whereas \textsc{J4U-Prog} and \textsc{J4U-Art} yield larger reductions averaging -14.6\% and -28.0\%, respectively. \textsc{J4U-Prog} affects LRMs differently, with a reduction ranging from only -4.8\% on \deepseek~to -22.3\% on \gpt. Overall, \textsc{J4U-Art} consistently produces the greatest semantic diversification with more stability across LRMs, ranging from -23.7\% on \deepseek~to -31.5\% on \gpt.

\textit{
Thus, at the level of both outputs and internal reasoning traces, \textsc{J4U}
produces the two observable signatures predicted for a relaxation operator:
higher-entropy predictive distributions and greater semantic dispersion across
reasoning traces. While this does not establish that \textsc{J4U} instantiates
a relaxation operator, it provides
converging indirect evidence in support of this interpretation.}

\section{Conclusion and Discussion}
\label{sec:discu}

We introduced \textsc{J4U}, a black-box uncertainty quantification method for LRMs that acts through \emph{jailbreak-based prompt-level perturbations}. Motivated by a RL view of alignment, we argued that it can suppress useful predictive variability, making many existing black-box UQ techniques hard to distinguish from simple repeated sampling. To address this, we introduced \emph{relaxation operators}, input transformations that theoretically emulate a higher KL-regularization parameter and, under the aggressive alignment assumption, provably reduce ECE (Sec.~\ref{sec:J4U}). We proposed jailbreak-style transformations as candidate instantiations of this mechanism and found that, empirically, they reproduce the observable signatures predicted by relaxation while improving confidence calibration (Sec.~\ref{sec:xp}).

\noindent\textbf{Discussion.} Perturbations may not restore \textit{legitimate} predictive variability in regions of uncertainty, but instead introduce generic variability that would not reflect true model uncertainty. To address this concern, we evaluate our method on \texttt{gsm8k}, a benchmark where all 4 tested LRMs achieve high accuracy (95\%, see Appendix~\ref{app:easybench}). In this setting, existing UQ methods are already well-calibrated and high-confidence is justified, providing a stringent test for spurious variability injection. Our results show that, unlike a random noise baseline, \textsc{J4U} does not significantly degrade accuracy or ECE. This suggests that \textsc{J4U} does not merely introduce random perturbations, but largely preserves regions of legitimate certainty while improving uncertainty estimates elsewhere.

A limitation of our work is its reliance on jailbreak-style adversarial prompts, behaviors that model providers are expected to mitigate. This creates a risk that specific jailbreaks may become ineffective over time. However, we show that \textsc{J4U} remains effective across jailbreak families, suggesting that the method is not tied to a single brittle attack pattern and can adapt to novel jailbreaking techniques. Recent surveys indicate that jailbreak methods are diverse, evolving, and steadily improving~\cite{yi2024jailbreakattacksdefenseslarge}, while existing defenses remain ineffective even for recent models~\cite{DBLP:conf/acl/XuLDLP24,pathade2025redteamingmindmachine}. Furthermore, theoretical results  suggest that as long as a model assigns non-zero probability to policy-violating behaviors, there exists a prompt capable of eliciting it~\cite{10.5555/3692070.3694246}.

\noindent\textbf{Future Work.} 
A natural extension of this study would be to explore more permissive access levels.
Access to internal weights and gradients could enable a mechanistic analysis of alignment and, in turn, identifying and specifically relaxing the model components or layers most responsible for predictive entropy collapse, potentially leading to even more robust calibration techniques.

\subsection*{AI use statement}

In this work, we used generative AI tools to edit portions of the manuscript text for clarity and readability, to identify relevant literature, and to assist with coding for experiments and plotting. We have not used generative AI tools to formulate mathematical claims, develop proofs, or design the core theoretical framework of relaxation operators, which were conceived and derived entirely by the authors. We have reviewed all AI-assisted work: all authors reviewed the final manuscript, suggested literature was read by at least one author, produced code was proofread and its outputs checked for consistency by at least one author. We take responsibility for the final content of this work, including text, claims, or artifacts produced with the aid of generative AI.

\subsection*{Ethics statement}

This work leverages existing jailbreak-style techniques for a legitimate purpose: improving black-box uncertainty quantification for LRMs. In our experiments, jailbreak-style transformations are applied exclusively to strictly benign questions, and are never used to elicit harmful or policy-violating content. We do not propose new jailbreak attacks, nor do we improve the effectiveness of existing ones at eliciting harmful content. Our contribution is a novel repurposing of these techniques for a utility goal, not an advance in jailbreak attack capability.

\subsection*{Reproducibility statement}

The assumptions underlying our theoretical results are stated in
Section~\ref{sec:J4U} and complete proofs are reported in Appendix~\ref{app:all_proofs}. Experimental settings are detailed in Appendix~\ref{app:detailed_XP}, and
exact prompts used for every method on one of the datasets, including the specification of the three
J4U operators, are given verbatim in Appendix~\ref{app:prompt}. All datasets
are publicly available and referenced with their source; no additional preprocessing beyond the step
described in Appendix~\ref{app:detailed_XP} is applied. All code used to produce the reported results,
including the J4U transformations, evaluation metrics and statistical tests,
is provided as anonymized supplementary material and will be released in a
public repository upon acceptance. Results on the three open-weight LRMs are
fully reproducible from this material; results on \closedgpt\ depend
on a proprietary API and may vary with future model updates.


\subsubsection*{Acknowledgments}

We thanks colleagues from Inria Saclay for their feedback on the paper, and particularly Pietro Marco Congedo for his opinion on early versions of the proofs presented in appendix. This work was partially supported by grant ANR-22-PECY-0002 (IPoP project) of the Agence Nationale de la Recherche. This project was provided with computing AI and storage resources by GENCI at IDRIS thanks to the grant 2026-AD011016360 on the supercomputer Jean Zay's H100 partition.

\bibliography{iclr2025_conference}

@article{shorinwa2025survey,
author = {Shorinwa, Ola and Mei, Zhiting and Lidard, Justin and Ren, Allen Z. and Majumdar, Anirudha},
title = {A Survey on Uncertainty Quantification of Large Language Models: Taxonomy, Open Research Challenges, and Future Directions},
year = {2025},
issue_date = {February 2026},
publisher = {Association for Computing Machinery},
address = {New York, NY, USA},
volume = {58},
number = {3},
issn = {0360-0300},
url = {https://doi.org/10.1145/3744238},
doi = {10.1145/3744238},
journal = {ACM Comput. Surv.},
month = sep,
articleno = {63},
numpages = {38}
}

@article{10.1093/jamiaopen/ooae154,
    author = {Gao, Yanjun and Myers, Skatje and Chen, Shan and Dligach, Dmitriy and Miller, Timothy and Bitterman, Danielle S and Chen, Guanhua and Mayampurath, Anoop and Churpek, Matthew M and Afshar, Majid},
    title = {Uncertainty estimation in diagnosis generation from large language models: next-word probability is not pre-test probability},
    journal = {JAMIA Open},
    volume = {8},
    number = {1},
    pages = {ooae154},
    year = {2025},
    month = {01},
    issn = {2574-2531},
    doi = {10.1093/jamiaopen/ooae154},
    url = {https://doi.org/10.1093/jamiaopen/ooae154},
    eprint = {https://academic.oup.com/jamiaopen/article-pdf/8/1/ooae154/61411131/ooae154.pdf},
}

@article{he2023investigating,
  title={Investigating uncertainty calibration of aligned language models under the multiple-choice setting},
  author={He, Guande and Cui, Peng and Chen, Jianfei and Hu, Wenbo and Zhu, Jun},
  journal={arXiv preprint arXiv:2310.11732},
  year={2023}
}

@article{doi:10.1056/AIp2300031,
author = {Alexander V. Eriksen  and Sören Möller  and Jesper Ryg },
title = {Use of GPT-4 to Diagnose Complex Clinical Cases},
journal = {NEJM AI},
volume = {1},
number = {1},
pages = {AIp2300031},
year = {2024},
doi = {10.1056/AIp2300031},

URL = {https://ai.nejm.org/doi/full/10.1056/AIp2300031},
eprint = {https://ai.nejm.org/doi/pdf/10.1056/AIp2300031}
}

@inproceedings{groot-valdenegro-toro-2024-overconfidence,
    title = "Overconfidence is Key: Verbalized Uncertainty Evaluation in Large Language and Vision-Language Models",
    author = "Groot, Tobias  and
      Valdenegro - Toro, Matias",
    editor = "Ovalle, Anaelia  and
      Chang, Kai-Wei  and
      Cao, Yang Trista  and
      Mehrabi, Ninareh  and
      Zhao, Jieyu  and
      Galstyan, Aram  and
      Dhamala, Jwala  and
      Kumar, Anoop  and
      Gupta, Rahul",
    booktitle = "Proceedings of the 4th Workshop on Trustworthy Natural Language Processing (TrustNLP 2024)",
    month = jun,
    year = "2024",
    address = "Mexico City, Mexico",
    publisher = "Association for Computational Linguistics",
    url = "https://aclanthology.org/2024.trustnlp-1.13/",
    doi = "10.18653/v1/2024.trustnlp-1.13",
    pages = "145--171"
}

@InProceedings{10.1007/978-3-032-01399-6_5,
author={Nariman Naderi and Zahra Atf and Peter R Lewis and Aref Mahjoub far and Seyed Amir Ahmad Safavi-Naini and Ali Soroush},
editor="Calvaresi, Davide
and Najjar, Amro
and Omicini, Andrea
and Aydogan, Reyhan
and Carli, Rachele
and Ciatto, Giovanni
and Tiribelli, Simona
and Fr{\"a}mling, Kary",
title="Evaluating Prompt Engineering Techniques for Accuracy and Confidence Elicitation in Medical {LLM}s",
booktitle="Explainable, Trustworthy, and Responsible AI and Multi-Agent Systems",
year="2026",
publisher="Springer Nature Switzerland",
address="Cham",
pages="67--84",

}

@misc{yang2024verbalizedconfidencescoresllms,
      title={On Verbalized Confidence Scores for {LLM}s}, 
      author={Daniel Yang and Yao-Hung Hubert Tsai and Makoto Yamada},
      year={2024},
      eprint={2412.14737},
      archivePrefix={arXiv},
      primaryClass={cs.CL},
      url={https://arxiv.org/abs/2412.14737}, 
}

@INPROCEEDINGS{10579515,
  author={Kang, Daniel and Li, Xuechen and Stoica, Ion and Guestrin, Carlos and Zaharia, Matei and Hashimoto, Tatsunori},
  booktitle={2024 IEEE Security and Privacy Workshops (SPW)}, 
  title={Exploiting Programmatic Behavior of {LLM}s: Dual-Use Through Standard Security Attacks}, 
  year={2024},
  volume={},
  number={},
  pages={132-143},
  doi={10.1109/SPW63631.2024.00018}}

@inproceedings{li-etal-2024-drattack,
    title = "{D}r{A}ttack: Prompt Decomposition and Reconstruction Makes Powerful {{LLM}}s Jailbreakers",
    author = "Li, Xirui  and
      Wang, Ruochen  and
      Cheng, Minhao  and
      Zhou, Tianyi  and
      Hsieh, Cho-Jui",
    editor = "Al-Onaizan, Yaser  and
      Bansal, Mohit  and
      Chen, Yun-Nung",
    booktitle = "Findings of the Association for Computational Linguistics: EMNLP 2024",
    month = nov,
    year = "2024",
    address = "Miami, Florida, USA",
    publisher = "Association for Computational Linguistics",
    url = "https://aclanthology.org/2024.findings-emnlp.813/",
    doi = "10.18653/v1/2024.findings-emnlp.813",
    pages = "13891--13913"
}

@misc{zhao2026seseblackboxuncertaintyquantification,
      title={SeSE: Black-Box Uncertainty Quantification for Large Language Models Based on Structural Information Theory}, 
      author={Xingtao Zhao and Hao Peng and Dingli Su and Xianghua Zeng and Chunyang Liu and Jinzhi Liao and Philip S. Yu},
      year={2026},
      eprint={2511.16275},
      archivePrefix={arXiv},
      primaryClass={cs.CL},
      url={https://arxiv.org/abs/2511.16275}, 
}

@inproceedings{manakul-etal-2023-selfcheckgpt,
    title = "{S}elf{C}heck{GPT}: Zero-Resource Black-Box Hallucination Detection for Generative Large Language Models",
    author = "Manakul, Potsawee  and
      Liusie, Adian  and
      Gales, Mark",
    editor = "Bouamor, Houda  and
      Pino, Juan  and
      Bali, Kalika",
    booktitle = "Proceedings of the 2023 Conference on Empirical Methods in Natural Language Processing",
    month = dec,
    year = "2023",
    address = "Singapore",
    publisher = "Association for Computational Linguistics",
    url = "https://aclanthology.org/2023.emnlp-main.557/",
    doi = "10.18653/v1/2023.emnlp-main.557",
    pages = "9004--9017"
}

@inproceedings{portillo-wightman-etal-2023-strength,
    title = "Strength in Numbers: Estimating Confidence of Large Language Models by Prompt Agreement",
    author = "Portillo Wightman, Gwenyth  and
      Delucia, Alexandra  and
      Dredze, Mark",
    editor = "Ovalle, Anaelia  and
      Chang, Kai-Wei  and
      Mehrabi, Ninareh  and
      Pruksachatkun, Yada  and
      Galystan, Aram  and
      Dhamala, Jwala  and
      Verma, Apurv  and
      Cao, Trista  and
      Kumar, Anoop  and
      Gupta, Rahul",
    booktitle = "Proceedings of the 3rd Workshop on Trustworthy Natural Language Processing (TrustNLP 2023)",
    month = jul,
    year = "2023",
    address = "Toronto, Canada",
    publisher = "Association for Computational Linguistics",
    url = "https://aclanthology.org/2023.trustnlp-1.28/",
    doi = "10.18653/v1/2023.trustnlp-1.28",
    pages = "326--362",
}

@inproceedings{JiangCalibratingLM,
  title={Calibrating Language Models via Augmented Prompt Ensembles},
  author={Mingjian Jiang and Yangjun Ruan and Sicong Huang and Saifei Liao and Silviu Pitis and Roger Baker Grosse and Jimmy Ba},
  booktitle = "ICML 2023 Workshop on Deployable Generative AI",
  year = "2023",
  url={https://openreview.net/pdf?id=L0dc4wqbNs}
}

@inproceedings{10.5555/3692070.3694246,
author = {Wolf, Yotam and Wies, Noam and Avnery, Oshri and Levine, Yoav and Shashua, Amnon},
title = {Fundamental limitations of alignment in large language models},
year = {2024},
publisher = {JMLR.org},
booktitle = {Proceedings of the 41st International Conference on Machine Learning},
articleno = {2176},
numpages = {34},
location = {Vienna, Austria},
series = {ICML'24}
}

@article{xiong2023can,
title={Can {{LLM}}s Express Their Uncertainty? An Empirical Evaluation of Confidence Elicitation in {{LLM}}s},
author={Miao Xiong and Zhiyuan Hu and Xinyang Lu and Yifei Li and Jie Fu and Junxian He and Bryan Hooi},
booktitle={The Twelfth International Conference on Learning Representations},
year={2024},
url={https://openreview.net/forum?id=gjeQKFxFpZ}
}

@misc{wan20252025foundationmodeltransparency,
      title={The 2025 Foundation Model Transparency Index}, 
      author={Alexander Wan and Kevin Klyman and Sayash Kapoor and Nestor Maslej and Shayne Longpre and Betty Xiong and Percy Liang and Rishi Bommasani},
      year={2025},
      eprint={2512.10169},
      archivePrefix={arXiv},
      primaryClass={cs.AI},
      url={https://arxiv.org/abs/2512.10169}, 
}

@InProceedings{pmlr-v286-xiao25a,
  title = {The Consistency Hypothesis in Uncertainty Quantification for Large Language Models},
  author = {Xiao, Quan and Bhattacharjya, Debarun and Ganesan, Balaji and Marinescu, Radu and Mirylenka, Katya and Pham, Nhan H and Glass, Michael and Lee, Junkyu},
  booktitle = {Proceedings of the Forty-first Conference on Uncertainty in Artificial Intelligence},
  pages =  {4636--4651},
  year = {2025},
  editor = {Chiappa, Silvia and Magliacane, Sara},
  volume = {286},
  series = {Proceedings of Machine Learning Research},
  month = {21--25 Jul},
  publisher = {PMLR},
  url = {https://proceedings.mlr.press/v286/xiao25a.html},
 }

@article{pedapati2024large,
  title={Large language model confidence estimation via black-box access},
  author={Pedapati, Tejaswini and Dhurandhar, Amit and Ghosh, Soumya and Dan, Soham and Sattigeri, Prasanna},
  journal={arXiv preprint arXiv:2406.04370},
  year={2024}
}

@inproceedings{tao2025revisiting,
title={Revisiting Uncertainty Estimation and Calibration of Large Language Models},
author={Linwei Tao and Yi-Fan Yeh and Minjing Dong and Tao Huang and Jialin Yu and Philip Torr and Chang Xu},
booktitle={Workshop on Scaling Environments for Agents},
year={2025},
url={https://openreview.net/forum?id=Q9CreVjHH7}
}

@article{
lin2022teaching,
title={Teaching Models to Express Their Uncertainty in Words},
author={Stephanie Lin and Jacob Hilton and Owain Evans},
journal={Transactions on Machine Learning Research},
issn={2835-8856},
year={2022},
url={https://openreview.net/forum?id=8s8K2UZGTZ},
note={}
}

@article{WANG20251612,
    author = {Wang, Hongchen and Li, Kangming and Ramsay, Scott and Fehlis, Yao and Kim, Edward and Hattrick-Simpers, Jason},
    title = {Evaluating the performance and robustness of LLMs in materials science Q\&amp;A and property predictions},
    journal = {Digital Discovery},
    volume = {4},
    number = {6},
    pages = {1612-1624},
    year = {2025},
    month = {06},

    issn = {2635-098X},
    doi = {10.1039/d5dd00090d},
    url = {https://doi.org/10.1039/d5dd00090d},
    eprint = {https://pubs.rsc.org/dd/article-pdf/4/6/1612/10331509/d5dd00090d.pdf},
}

@article{yang2024just,
  title={Just rephrase it! Uncertainty estimation in closed-source language models via multiple rephrased queries},
  author={Yang, Adam and Chen, Chen and Pitas, Konstantinos},
  journal={arXiv preprint arXiv:2405.13907},
  year={2024}
}

@article{zeng2024uncertainty,
  title={Uncertainty is fragile: Manipulating uncertainty in large language models},
  author={Zeng, Qingcheng and Jin, Mingyu and Yu, Qinkai and Wang, Zhenting and Hua, Wenyue and Zhou, Zihao and Sun, Guangyan and Meng, Yanda and Ma, Shiqing and Wang, Qifan and others},
  journal={arXiv preprint arXiv:2407.11282},
  year={2024}
}

@article{obadinma2025robustness,
  title={On the Robustness of Verbal Confidence of {LLM}s in Adversarial Attacks},
  author={Obadinma, Stephen and Zhu, Xiaodan},
  journal={arXiv preprint arXiv:2507.06489},
  year={2025}
}

@article{obadinma2024calibration,
  title={Calibration attacks: A comprehensive study of adversarial attacks on model confidence},
  author={Obadinma, Stephen and Zhu, Xiaodan and Guo, Hongyu},
  journal={arXiv preprint arXiv:2401.02718},
  year={2024}
}

@misc{shen2025safetyinstinctsLLMslearn,
      title={Safety Instincts: {LLM}s Learn to Trust Their Internal Compass for Self-Defense}, 
      author={Guobin Shen and Dongcheng Zhao and Haibo Tong and Jindong Li and Feifei Zhao and Yi Zeng},
      year={2025},
      eprint={2510.01088},
      archivePrefix={arXiv},
      primaryClass={cs.AI},
      url={https://arxiv.org/abs/2510.01088}, 
}

@misc{epstein2025LLMsoverconfidentevaluatingconfidence,
      title={{LLM}s are Overconfident: Evaluating Confidence Interval Calibration with FermiEval}, 
      author={Elliot L. Epstein and John Winnicki and Thanawat Sornwanee and Rajat Dwaraknath},
      year={2025},
      eprint={2510.26995},
      archivePrefix={arXiv},
      primaryClass={stat.ME},
      url={https://arxiv.org/abs/2510.26995}, 
}

@article{XU2025101370,
title = {Toward large reasoning models: A survey of reinforced reasoning with large language models},
journal = {Patterns},
volume = {6},
number = {10},
pages = {101370},
year = {2025},
issn = {2666-3899},
doi = {https://doi.org/10.1016/j.patter.2025.101370},
url = {https://www.sciencedirect.com/science/article/pii/S2666389925002181},
author = {Fengli Xu and Qianyue Hao and Chenyang Shao and Zefang Zong and Yu Li and Jingwei Wang and Yunke Zhang and Jingyi Wang and Xiaochong Lan and Jiahui Gong and Tianjian Ouyang and Fanjin Meng and Yuwei Yan and Qinglong Yang and Yiwen Song and Sijian Ren and Xinyuan Hu and Jie Feng and Chen Gao and Yong Li}
}

@misc{openai2025gptoss120bgptoss20bmodel,
      title={gpt-oss-120b \& gpt-oss-20b Model Card}, 
      author={OpenAI},
      year={2025},
      eprint={2508.10925},
      archivePrefix={arXiv},
      primaryClass={cs.CL},
      url={https://arxiv.org/abs/2508.10925}, 
}

@misc{yang2025qwen3technicalreport,
      title={Qwen3 Technical Report}, 
      author={{Qwen Team}},
      year={2025},
      eprint={2505.09388},
      archivePrefix={arXiv},
      primaryClass={cs.CL},
      url={https://arxiv.org/abs/2505.09388}, 
}

@misc{openai2024gpt4technicalreport,
      title={GPT-4 Technical Report}, 
      author={OpenAI},
      year={2024},
      eprint={2303.08774},
      archivePrefix={arXiv},
      primaryClass={cs.CL},
      url={https://arxiv.org/abs/2303.08774}, 
}

@article{qwen3embedding,
  title={Qwen3 Embedding: Advancing Text Embedding and Reranking Through Foundation Models},
  author={Zhang, Yanzhao and Li, Mingxin and Long, Dingkun and Zhang, Xin and Lin, Huan and Yang, Baosong and Xie, Pengjun and Yang, An and Liu, Dayiheng and Lin, Junyang and Huang, Fei and Zhou, Jingren},
  journal={arXiv preprint arXiv:2506.05176},
  year={2025}
}

@inproceedings{ICLR2025_29fb6e14,
 author = {Leng, Jixuan and Huang, Chengsong and Zhu, Banghua and Huang, Jiaxin},
 booktitle = {International Conference on Learning Representations},
 editor = {Y. Yue and A. Garg and N. Peng and F. Sha and R. Yu},
 pages = {16484--16517},
 title = {Taming Overconfidence in {LLM}s: Reward Calibration in {RLHF}},
 url = {https://proceedings.iclr.cc/paper_files/paper/2025/file/29fb6e1456b3d8b57ede5c45aa2c6537-Paper-Conference.pdf},
 volume = {2025},
 year = {2025}
}

@article{deepseek,
   title={DeepSeek-R1 incentivizes reasoning in {LLM}s through reinforcement learning},
   volume={645},
   ISSN={1476-4687},
   url={http://dx.doi.org/10.1038/s41586-025-09422-z},
   DOI={10.1038/s41586-025-09422-z},
   number={8081},
   journal={Nature},
   publisher={Springer Science and Business Media LLC},
   author={DeepSeek-AI},
   year={2025},
   month=sep, pages={633–638} }

@inproceedings{10.5555/3666122.3669630,
author = {Wei, Alexander and Haghtalab, Nika and Steinhardt, Jacob},
title = {Jailbroken: how does {LLM} safety training fail?},
year = {2023},
publisher = {Curran Associates Inc.},
address = {Red Hook, NY, USA},
booktitle = {Proceedings of the 37th International Conference on Neural Information Processing Systems},
articleno = {3508},
numpages = {32},
location = {New Orleans, LA, USA},
series = {NIPS '23}
}

@article{ouyang2022training,
  title={Training language models to follow instructions with human feedback},
  author={Ouyang, Long and Wu, Jeffrey and Jiang, Xu and Almeida, Diogo and Wainwright, Carroll and Mishkin, Pamela and Zhang, Chong and Agarwal, Sandhini and Slama, Katarina and Ray, Alex and others},
  journal={Advances in neural information processing systems},
  volume={35},
  pages={27730--27744},
  year={2022}
}

@article{rafailov2023direct,
  title={Direct preference optimization: Your language model is secretly a reward model},
  author={Rafailov, Rafael and Sharma, Archit and Mitchell, Eric and Manning, Christopher D and Ermon, Stefano and Finn, Chelsea},
  journal={Advances in neural information processing systems},
  volume={36},
  pages={53728--53741},
  year={2023}
}

@article{shao2024deepseekmath,
  title={Deepseekmath: Pushing the limits of mathematical reasoning in open language models},
  author={Shao, Zhihong and Wang, Peiyi and Zhu, Qihao and Xu, Runxin and Song, Junxiao and Bi, Xiao and Zhang, Haowei and Zhang, Mingchuan and Li, YK and Wu, Yang and others},
  journal={arXiv preprint arXiv:2402.03300},
  year={2024}
}

@article{zheng2025group,
  title={Group sequence policy optimization},
  author={Zheng, Chujie and Liu, Shixuan and Li, Mingze and Chen, Xiong-Hui and Yu, Bowen and Gao, Chang and Dang, Kai and Liu, Yuqiong and Men, Rui and Yang, An and others},
  journal={arXiv preprint arXiv:2507.18071},
  year={2025}
}

@article{schulman2017proximal,
  title={Proximal policy optimization algorithms},
  author={Schulman, John and Wolski, Filip and Dhariwal, Prafulla and Radford, Alec and Klimov, Oleg},
  journal={arXiv preprint arXiv:1707.06347},
  year={2017}
}

@inproceedings{
gx-chen2026klregularized,
title={{KL}-Regularized Reinforcement Learning is Designed to Mode Collapse},
author={Anthony GX-Chen and Jatin Prakash and Jeff Guo and Rob Fergus and Rajesh Ranganath},
booktitle={The Fourteenth International Conference on Learning Representations},
year={2026},
url={https://openreview.net/forum?id=flBRtdIihA}
}

@inproceedings{azar2024general,
  title={A general theoretical paradigm to understand learning from human preferences},
  author={Azar, Mohammad Gheshlaghi and Guo, Zhaohan Daniel and Piot, Bilal and Munos, Remi and Rowland, Mark and Valko, Michal and Calandriello, Daniele},
  booktitle={International Conference on Artificial Intelligence and Statistics},
  pages={4447--4455},
  year={2024},
  organization={PMLR}
}

@inproceedings{
wang2023selfconsistency,
title={Self-Consistency Improves Chain of Thought Reasoning in Language Models},
author={Xuezhi Wang and Jason Wei and Dale Schuurmans and Quoc V Le and Ed H. Chi and Sharan Narang and Aakanksha Chowdhery and Denny Zhou},
booktitle={The Eleventh International Conference on Learning Representations },
year={2023},
url={https://openreview.net/forum?id=1PL1NIMMrw}
}

@inproceedings{
kuhn2023semantic,
title={Semantic Uncertainty: Linguistic Invariances for Uncertainty Estimation in Natural Language Generation},
author={Lorenz Kuhn and Yarin Gal and Sebastian Farquhar},
booktitle={The Eleventh International Conference on Learning Representations },
year={2023},
url={https://openreview.net/forum?id=VD-AYtP0dve}
}

@inproceedings{naeini2015obtaining,
  title={Obtaining well calibrated probabilities using bayesian binning},
  author={Naeini, Mahdi Pakdaman and Cooper, Gregory and Hauskrecht, Milos},
  booktitle={Proceedings of the AAAI conference on artificial intelligence},
  volume={29},
  number={1},
  year={2015}
}

@article{glenn1950verification,
  title={Verification of forecasts expressed in terms of probability},
  author={{Brier}, Glenn W.},
  journal={Monthly weather review},
  volume={78},
  number={1},
  pages={1--3},
  year={1950},
  publisher={War Department, Office of the Chief Signal Officer}
}

@article{cobbe2021gsm8k,
  title={Training Verifiers to Solve Math Word Problems},
  author={Cobbe, Karl and Kosaraju, Vineet and Bavarian, Mohammad and Chen, Mark and Jun, Heewoo and Kaiser, Lukasz and Plappert, Matthias and Tworek, Jerry and Hilton, Jacob and Nakano, Reiichiro and Hesse, Christopher and Schulman, John},
  journal={arXiv preprint arXiv:2110.14168},
  year={2021}
}

@inproceedings{guo2017calibration,
author = {Guo, Chuan and Pleiss, Geoff and Sun, Yu and Weinberger, Kilian Q.},
title = {On calibration of modern neural networks},
year = {2017},
publisher = {JMLR.org},

booktitle = {Proceedings of the 34th International Conference on Machine Learning - Volume 70},
pages = {1321–1330},
numpages = {10},
location = {Sydney, NSW, Australia},
series = {ICML'17}
}

@article{zhou2023lima,
  title={Lima: Less is more for alignment},
  author={Zhou, Chunting and Liu, Pengfei and Xu, Puxin and Iyer, Srinivasan and Sun, Jiao and Mao, Yuning and Ma, Xuezhe and Efrat, Avia and Yu, Ping and Yu, Lili and others},
  journal={Advances in Neural Information Processing Systems},
  volume={36},
  pages={55006--55021},
  year={2023}
}

@inproceedings{yukhymenko2024synthetic,
    title={A Synthetic Dataset for Personal Attribute Inference}, 
    author={Hanna Yukhymenko and Robin Staab and Mark Vero and Martin Vechev},
    year={2024},
    booktitle={Thirty-eighth Conference on Neural Information Processing Systems Datasets and Benchmarks Track},
    url={https://openreview.net/forum?id=1nqfIQIQBf}
}

@inproceedings{staab24beyond,
    title={Beyond Memorization: Violating Privacy via Inference with Large Language Models},
    author={Robin Staab and Mark Vero and Mislav Balunović and Martin Vechev},
    booktitle={The Twelfth International Conference on Learning Representations},
    year={2024},
}

@inproceedings{
du2025supergpqa,
title={Super{GPQA}: Scaling {{LLM}} Evaluation across 285 Graduate Disciplines},
author={Xeron Du and Yifan Yao and Kaijing Ma and {et al.}},
booktitle={The Thirty-ninth Annual Conference on Neural Information Processing Systems Datasets and Benchmarks Track},
year={2025},
url={https://openreview.net/forum?id=6WgflzYQpf}
}

@misc{hle,
      title={Humanity's Last Exam}, 
      author={Long Phan and Alice Gatti and Ziwen Han and {et al.}},
      year={2025},
      eprint={2501.14249},
      archivePrefix={arXiv},
      primaryClass={cs.LG},
      url={https://arxiv.org/abs/2501.14249}, 
}

@misc{zou2023universaltransferableadversarialattacks,
      title={Universal and Transferable Adversarial Attacks on Aligned Language Models}, 
      author={Andy Zou and Zifan Wang and Nicholas Carlini and Milad Nasr and J. Zico Kolter and Matt Fredrikson},
      year={2023},
      eprint={2307.15043},
      archivePrefix={arXiv},
      primaryClass={cs.CL},
      url={https://arxiv.org/abs/2307.15043}, 
}

@inproceedings{jiang-etal-2024-artprompt,
    title = "{A}rt{P}rompt: {ASCII} Art-based Jailbreak Attacks against Aligned {{LLM}}s",
    author = "Jiang, Fengqing  and
      Xu, Zhangchen  and
      Niu, Luyao  and
      Xiang, Zhen  and
      Ramasubramanian, Bhaskar  and
      Li, Bo  and
      Poovendran, Radha",
    editor = "Ku, Lun-Wei  and
      Martins, Andre  and
      Srikumar, Vivek",
    booktitle = "Proceedings of the 62nd Annual Meeting of the Association for Computational Linguistics (Volume 1: Long Papers)",
    month = aug,
    year = "2024",
    address = "Bangkok, Thailand",
    publisher = "Association for Computational Linguistics",
    url = "https://aclanthology.org/2024.acl-long.809/",
    doi = "10.18653/v1/2024.acl-long.809",
    pages = "15157--15173",

}

@article{fisher1922mathematical,
  title={On the mathematical foundations of theoretical statistics},
  author={Fisher, Ronald A},
  journal={Philosophical transactions of the Royal Society of London. Series A, containing papers of a mathematical or physical character},
  volume={222},
  number={594-604},
  pages={309--368},
  year={1922},
  publisher={The Royal Society London}
}

@misc{yi2024jailbreakattacksdefenseslarge,
      title={Jailbreak Attacks and Defenses Against Large Language Models: A Survey}, 
      author={Sibo Yi and Yule Liu and Zhen Sun and Tianshuo Cong and Xinlei He and Jiaxing Song and Ke Xu and Qi Li},
      year={2024},
      eprint={2407.04295},
      archivePrefix={arXiv},
      primaryClass={cs.CR},
      url={https://arxiv.org/abs/2407.04295}, 
}

@inproceedings{DBLP:conf/acl/XuLDLP24,
  author       = {Zihao Xu and
                  Yi Liu and
                  Gelei Deng and
                  Yuekang Li and
                  Stjepan Picek},
  editor       = {Lun{-}Wei Ku and
                  Andre Martins and
                  Vivek Srikumar},
  title        = {A Comprehensive Study of Jailbreak Attack versus Defense for Large
                  Language Models},
  booktitle    = {Findings of the Association for Computational Linguistics, {ACL} 2024,
                  Bangkok, Thailand and virtual meeting, August 11-16, 2024},
  series       = {Findings of {ACL}},
  volume       = {{ACL} 2024},
  pages        = {7432--7449},
  publisher    = {Association for Computational Linguistics},
  year         = {2024},
  url          = {https://doi.org/10.18653/v1/2024.findings-acl.443},
  doi          = {10.18653/V1/2024.FINDINGS-ACL.443},
  bibsource    = {dblp computer science bibliography, https://dblp.org}
}

@misc{pathade2025redteamingmindmachine,
      title={Red Teaming the Mind of the Machine: A Systematic Evaluation of Prompt Injection and Jailbreak Vulnerabilities in {LLM}s}, 
      author={Chetan Pathade},
      year={2025},
      eprint={2505.04806},
      archivePrefix={arXiv},
      primaryClass={cs.CR},
      url={https://arxiv.org/abs/2505.04806}, 
}

@misc{mccann2018naturallanguagedecathlonmultitask,
      title={The Natural Language Decathlon: Multitask Learning as Question Answering}, 
      author={Bryan McCann and Nitish Shirish Keskar and Caiming Xiong and Richard Socher},
      year={2018},
      eprint={1806.08730},
      archivePrefix={arXiv},
      primaryClass={cs.CL},
      url={https://arxiv.org/abs/1806.08730}, 
}

@inproceedings{khashabi-etal-2020-unifiedqa,
    title = "{UNIFIEDQA}: Crossing Format Boundaries with a Single {QA} System",
    author = "Khashabi, Daniel  and
      Min, Sewon  and
      Khot, Tushar  and
      Sabharwal, Ashish  and
      Tafjord, Oyvind  and
      Clark, Peter  and
      Hajishirzi, Hannaneh",
    editor = "Cohn, Trevor  and
      He, Yulan  and
      Liu, Yang",
    booktitle = "Findings of the Association for Computational Linguistics: EMNLP 2020",
    month = nov,
    year = "2020",
    address = "Online",
    publisher = "Association for Computational Linguistics",
    url = "https://aclanthology.org/2020.findings-emnlp.171/",
    doi = "10.18653/v1/2020.findings-emnlp.171",
    pages = "1896--1907"
}

@article{10.1561/2200000101,
author = {Angelopoulos, Anastasios N. and Bates, Stephen},
title = {Conformal Prediction: A Gentle Introduction},
year = {2023},
issue_date = {Mar 2023},
publisher = {Now Publishers Inc.},
address = {Hanover, MA, USA},
volume = {16},
number = {4},
issn = {1935-8237},
url = {https://doi.org/10.1561/2200000101},
doi = {10.1561/2200000101},
journal = {Found. Trends Mach. Learn.},
month = mar,
pages = {494–591},
numpages = {114}
}

@inproceedings{
andriushchenko2025does,
title={Does Refusal Training in {{LLM}}s Generalize to the Past Tense?},
author={Maksym Andriushchenko and Nicolas Flammarion},
booktitle={The Thirteenth International Conference on Learning Representations},
year={2025},
url={https://openreview.net/forum?id=aJUuere4fM}
}

@inproceedings{DBLP:conf/naacl/DingKMCXCH24,
  author       = {Peng Ding and
                  Jun Kuang and
                  Dan Ma and
                  Xuezhi Cao and
                  Yunsen Xian and
                  Jiajun Chen and
                  Shujian Huang},
  editor       = {Kevin Duh and
                  Helena G{\'{o}}mez{-}Adorno and
                  Steven Bethard},
  title        = {A Wolf in Sheep's Clothing: Generalized Nested Jailbreak Prompts can
                  Fool Large Language Models Easily},
  booktitle    = {Proceedings of the 2024 Conference of the North American Chapter of
                  the Association for Computational Linguistics: Human Language Technologies
                  (Volume 1: Long Papers), {NAACL} 2024, Mexico City, Mexico, June 16-21,
                  2024},
  pages        = {2136--2153},
  publisher    = {Association for Computational Linguistics},
  year         = {2024},
  url          = {https://doi.org/10.18653/v1/2024.naacl-long.118},
  doi          = {10.18653/V1/2024.NAACL-LONG.118},
  bibsource    = {dblp computer science bibliography, https://dblp.org}
}

@book{boucheron2013inequalities,
    author = {Boucheron, Stéphane and Lugosi, Gábor and Massart, Pascal},
    title = {Concentration Inequalities: A Nonasymptotic Theory of Independence},
    publisher = {Oxford University Press},
    year = {2013},
    month = {02},
    isbn = {9780199535255},
    doi = {10.1093/acprof:oso/9780199535255.001.0001},
    url = {https://doi.org/10.1093/acprof:oso/9780199535255.001.0001},
}

@misc{openai2026gpt56,
  author       = {{OpenAI}},
  title        = {{GPT-5.6: Frontier Intelligence That Scales with Your Ambition}},
  year         = {2026},
  month        = jul,
  howpublished = {\url{https://openai.com/index/gpt-5-6/}},
}
\bibliographystyle{iclr2025_conference}

\appendix

\section{Proofs}
\label{app:all_proofs}
\subsection{Proof of the proposition \ref{prop:KL_divergence}} \label{app:prop_proof}

\begin{proof}
Let $\pi_\beta$ be the policy defined in Eq.~\ref{eq:gibbs_policy}. 
By the definition of the relaxation operator $j \in \mathcal{J}$, there exist $\beta_1, \beta_2$ such that $\pi_j \equiv \pi_{\beta_1}$ and $\pi_{\text{RL}} \equiv \pi_{\beta_2}$ with $\beta_1 > \beta_2$.
Consider the mapping $D: \mathbb{R}^*_+ \to \mathbb{R}_+$ defined by $D(\beta) = \mathbb{D}_{\text{KL}}(\pi_\beta \parallel \pi_{\text{ref}})$.
Expanding the KL divergence using the closed-form expression of $\pi_\beta$:
$$
D(\beta) = \mathbb{E}_{t \sim \pi_\beta} \left[ \log \frac{\pi_\beta(t|x)}{\pi_{\text{ref}}(t|x)} \right] = \mathbb{E}_{t \sim \pi_\beta} \left[ \frac{R(x,t)}{\beta} - \log Z_\beta(x) \right]
$$
We evaluate the monotonicity of $D(\beta)$ by computing its derivative with respect to $\beta$:
\begin{enumerate}
    \item Derivative of the log-partition function: $$ \frac{\partial}{\partial \beta} \log Z_\beta(x) = \frac{1}{Z_\beta(x)} \sum \pi_{\text{ref}}(t|x) e^{R/\beta} \left( -\frac{R}{\beta^2} \right)  = - \frac{1}{\beta^2} \mathbb{E}_{\pi_\beta}[R(x,t)] $$.
    \item Derivative of the expectation term: Assuming $R$ is bounded, $D(\beta)$ is differentiable under the integral sign. We can use the identity $\nabla_\beta \mathbb{E}_{\pi_\beta}[f] = \mathbb{E}_{\pi_\beta}[f \nabla_\beta \log \pi_\beta]$ and noting that $\nabla_\beta \log \pi_\beta = -\frac{1}{\beta^2}(R - \mathbb{E}_{\pi_\beta}[R])$, we obtain:
\end{enumerate}
$$
\frac{\partial}{\partial \beta} \left( \frac{1}{\beta} \mathbb{E}_{\pi_\beta}[R] \right) = -\frac{1}{\beta^2} \mathbb{E}_{\pi_\beta}[R] - \frac{1}{\beta^3} \text{Var}_{\pi_\beta}(R(x,t))
$$
Combining these results:
$$
\frac{d D(\beta)}{d \beta} = \left[ -\frac{1}{\beta^2} \mathbb{E}_{\pi_\beta}[R] - \frac{1}{\beta^3} \text{Var}_{\pi_\beta}(R) \right] - \left[ - \frac{1}{\beta^2} \mathbb{E}_{\pi_\beta}[R] \right] = - \frac{\text{Var}_{\pi_\beta}(R(x,t))}{\beta^3}
$$
Since $\text{Var}_{\pi_\beta}(R(x,t)) \geq 0$ and $\beta > 0$, then $\frac{d D(\beta)}{d \beta} \leq 0$. Assuming $R(x,t)$ is non-constant on the support of $\pi_{\text{ref}}$, the variance is strictly positive, making $D(\beta)$ strictly decreasing. Given $\beta_1 > \beta_2$, it follows that $D(\beta_1) < D(\beta_2)$, which completes the proof.
\end{proof}

\subsection{Proof of the theorem \ref{thm:relaxation_ece}}\label{app:thm_proof}

\begin{proof}

The proof works in three parts.
First, we decompose ECE in an over-aligned setting as an expected value of confidence minus an accuracy.
Second, we show that higher $\beta$ parameters lead to lower expected confidence.
Finally, we conclude that relaxation operators that augment $\beta$ consequently lower ECE.

For the first point, we remind that for a policy $\pi$,
we defined the majority response estimator on $K$ sample of size $\hat{Y}_\pi^{(K)}(x)$ (see Def.~\ref{def:majority_estimator})
that converges towards the theoretical mode $y^*_\pi$ for larger sample sizes (see Def.~\ref{def:theoretical_mode}),
and that the estimated confidence $\hat c_\pi$ is the frequency of this majority response (see Def. ~\ref{def:confidence_estimator}).
We also defined accuracy indicator (see Def. \ref{def:accuracy_indicator}) as $I_\pi \coloneqq \indicatrice{\hat{Y}^{(K)}_\pi(X) = Y}$ which is $1$ when $\hat{Y}_\pi^{(K)}(x)$ is correct and $0$ otherwise.

In this setting, we recall our assumptions:
\begin{itemize}
    \item The policy we consider is a closed-form optimal policy (see Eq.~\ref{eq:gibbs_policy}).
    \item The policy we consider is overconfident (see Def.~\ref{def:overconfidence}).
    \item The policy is the result of an excessive RL-based alignment. This notably implies that:
    \begin{itemize}

        \item the reasonings leading to the theoretical mode $y^*_\pi$ have better-than-average rewards.

        \item there exist relaxations that are not strong enough to change the theoretical mode and that keep an overconfident regime.

    \end{itemize}
\end{itemize}

\paragraph{Why aggressive alignment yields a reward-maximizing, dominant mode.}
Grouping traces by answer, Eq.~\ref{eq:gibbs_policy} gives
$\pi_\beta(y \mid x) \propto \sum_{t:\, f(t)=y} \pi_{\mathrm{ref}}(t \mid x)\, e^{R(x,t)/\beta}$.
When alignment is aggressive, i.e.\ $\beta$ is small relative to reward differences, each sum is
dominated by its highest-reward traces, so for two answers $y_1, y_2$ with best rewards
$R^*_{y_1} > R^*_{y_2}$,
\[
  \frac{\pi_\beta(y_1 \mid x)}{\pi_\beta(y_2 \mid x)} \;\approx\;
  \frac{\pi_{\mathrm{ref}}(\mathcal{T}^*_{y_1} \mid x)}{\pi_{\mathrm{ref}}(\mathcal{T}^*_{y_2} \mid x)}
  \, \exp\!\Big(\frac{R^*_{y_1} - R^*_{y_2}}{\beta}\Big),
\]
where $\mathcal{T}^*_{y}$ denotes the highest-reward traces leading to $y$. The reward gap is
amplified exponentially in $1/\beta$, so the answer reached by the highest-reward reasoning
becomes the theoretical mode $y^*$, and it dominates its closest competitor by a large factor.
Two consequences follow. First, since almost all of the mass of $y^*$ sits on near-maximal-reward
traces, the reasonings leading to $y^*$ have better-than-average reward. Second, the mode is robust
to moderate increases of $\beta$: it can only be overtaken once the exponential factor no longer
compensates the reference-mass ratio, i.e.\ roughly when
$\beta \gtrsim (R^*_{y^*} - R^*_{y_2}) / \log\big(\pi_{\mathrm{ref}}(\mathcal{T}^*_{y_2}\mid x) /
\pi_{\mathrm{ref}}(\mathcal{T}^*_{y^*}\mid x)\big)$, and never if $y^*$ also carries more reference
mass. This leaves a range of relaxations that lower confidence without changing the mode.

\paragraph{ECE as Expected value minus Accuracy.}

The ECE is best understood as a measure of the tension between the model's confidence (here estimated by $\hat{c}_\pi$), and its accuracy.
When the model is overconfident, this tension only ever goes one way,
as the majority response is always more frequent than it needs to be.

\begin{align*}
\text{ECE}(\pi)
&=
\mathbb{E} \left[ \big| \mathbb{P}(I_\pi = 1 \mid \hat{c}_\pi) - \hat{c}_\pi \big| \right] \\
&=
\mathbb{E} \left[\hat{c}_\pi- \mathbb{P}(I_\pi = 1 \mid \hat{c}_\pi)  \right] &\text{by assumption of overconfidence} \\
&=
\mathbb{E} \left[\hat{c}_\pi  \right]  - \mathbb{E} \left[ \mathbb{P}(I_\pi = 1 \mid \hat{c}_\pi)  \right] \\
&=
\mathbb{E} \left[\hat{c}_\pi  \right]  -  \mathbb{P}(I_\pi = 1) &  \text{by law of total probability} \\
\end{align*}

Furthermore, for all realizations $y_1, \dots, y_K$ of $f(T_1), \dots, f(T_K)$, we have:

$$
\hat{c}_\pi(x)
\coloneqq
\frac{1}{K} \sum_{k=1}^K \indicatrice{f(T_k)=\hat{Y}_\pi^{(K)} } \\
=
\max_{y \in \mathcal Y}\frac{1}{K} \sum_{k=1}^K \indicatrice{y_k = y} \\
\coloneqq
\max_{y \in \mathcal Y}\hat{p}_y
$$

Since expected value of a maximum is greater than the maximum of the expected values, we have:

$$
\mathbb{E}\left[ \max_{y \in \mathcal{Y}} \hat{p}_y \right] \geq \max_{y \in \mathcal{Y}} \mathbb{E}[\hat{p}_y] = \max_{y \in \mathcal{Y}} p_y = p_{y^*_\pi} \coloneqq p^*
$$

We will rephrase this inequality as a sum:

$$
\mathbb{E}[\hat{c}_\pi(x)] = p^* + \text{bias}_K(x), \quad \text{with } \text{bias}_K(x) \geq 0
$$

Since $\hat c_\pi$ is bounded, it is subgaussian.
As such we can use Hoeffding's inequality to put an upper bound on this bias.
For all $\varepsilon >0$,
$$
\mathbb{P}\left( \max_{y \in \mathcal{Y}} \hat{p}_y \geq p^* + \varepsilon \right) \leq |\mathcal{Y}| \exp(-2K\varepsilon^2)
$$
By a standard maximal inequality for sub-Gaussian variables (see, e.g., \cite{boucheron2013inequalities}), we obtain the following bound on the bias:
$$
\text{bias}_K(x) \leq \int_0^1 \min\left(1, |\mathcal{Y}|e^{-2K\varepsilon^2}\right) d\varepsilon \lesssim \sqrt{\frac{\log |\mathcal{Y}|}{K}}
$$

From this bias estimation we get the following on expected confidence:

$$
\mathbb{E}[\hat{c}_\pi(x)] = p^* + \mathcal{O}\left( \sqrt{\frac{\log |\mathcal{Y}|}{K}} \right)
$$

If we combine this formula with the equation obtained for ECE under the overconfidence assumption, we get:

$$
\text{ECE}(\pi) = p^* -  \mathbb{P}(I_\pi = 1) + \mathcal{O}\left( \sqrt{\frac{\log |\mathcal{Y}|}{K}} \right) 
$$

\paragraph{Optimal policy is monotonous when $\beta$ varies.}\label{proof:monotonie}

By assumption we are only considering policies $\{\pi_\beta\}_{\beta>0}$ as defined in \eqref{eq:gibbs_policy}.

As a reminder, since the reward function $R$ is assumed bounded, we have that $\beta\mapsto \pi_\beta(t\mid x)$ is $C^1$ at every $(x,t)\in \mathcal X \times \mathcal T$. As such the derivative of its expected value is the expected value of its derivative.

Our assumption that the original behaviour is overaligned by RL covers some margin of relaxation for which the theoretical mode is unchanged, as the theoretical mode is bolstered by strong rewards. As such, we are showing our theorem by varying $\beta$ in this span.

We compute $\partial_\beta \log \pi_\beta(t|x)$ the partial derivative of $\pi_\beta(t|x)$ with respect to $\beta$:
$$
\partial_\beta \log \pi_\beta(t|x) = -\frac{R(x,t)}{\beta^2}-\partial_\beta \log Z_\beta(x)
$$
However, we know of the derivative of $\log Z_\beta$ with respect to $\beta$ that:
\begin{align*}
\partial_\beta \log Z_\beta(x)
= \frac{1}{Z_\beta(x)} \sum_{t'} \pi_{\text{ref}}(t'|x) \exp \left(\frac{R(x,t')}{\beta}\right) \left(-\frac{R(x,t')}{\beta^2} \right)
= - \frac{1}{\beta^2} \sum_{t'} \pi_\beta(t'|x)R(x,t')
= - \frac{1}{\beta^2} \mathbb E [R(x,t')]
\end{align*}
We replace this in the previous equation to get:
\begin{align*}
\partial_\beta \log \pi_\beta(t|x)
= -\frac{R(x,t)}{\beta^2} + \frac{1}{\beta^2} \mathbb E[R(x,t')]
= \frac{1}{\beta^2} ( \mathbb E[R(x,t')]- R(x,t))
\end{align*}
Finally, since $\partial_\beta \pi_\beta(t|x) = \pi_\beta(t|x) \partial_\beta \log \pi_\beta(t|x)$, we get:
$$
\partial_\beta \pi_\beta(t|x) = \frac{\pi_\beta(t|x)}{\beta^2} \big( \mathbb{E}[R(x,t')] - R(x,t) \big)
$$
The marginal probability of the final output being $y$ is the sum of its probabilities over the space of all reasonings $\mathcal{T}_y = \{t \in \mathcal{T} \mid f(t) = y\}$.
The derivative with respect to $\beta$ of this sum is:
\begin{align*}
\partial_\beta \pi_\beta(y|x)
&=
\sum_{t \in \mathcal{T}_y} \frac{\pi_\beta(t|x)}{\beta^2} \big( \mathbb{E}[R(x,t')] - R(x,t) \big) \\
&=
\frac{\pi_\beta(y|x)}{\beta^2} \left( \mathbb{E}[R(x,t')] - \sum_{t \in \mathcal{T}_y} \frac{\pi_\beta(t|x)}{\pi_\beta(y|x)} R(x,t) \right) \\
&=
\frac{\pi_\beta(y|x)}{\beta^2} \big( \mathbb{E}[R(x,t')] - \mathbb{E}[R(x,t) \mid f(t)=y] \big)
\end{align*}
For the theoretical mode $y^*_\pi$ reasoning traces leading to $y^*_\pi$ have on average a better-than-average reward, that is to say $\mathbb{E}[R(x,t) \mid f(t)=y^*_\pi] \geq \mathbb{E}[R(x,t')]$. On that condition, $\partial_\beta \pi_\beta(y^*_\pi|x) \leq 0$.

Thus $p^*$ decreases as $\beta$ grows.

For any $\beta_1 > \beta_2$, we can compare the Expected Calibration Errors (ECE) as follows:
\begin{equation}
\text{ECE}(\pi_{\beta_1}) - \text{ECE}(\pi_{\beta_2}) = (p^*_{\beta_1} - p^*_{\beta_2}) - \left( P(I_{\pi_{\beta_1}} = 1) - P(I_{\pi_{\beta_2}} = 1) \right) + O\left( \sqrt{\frac{\log |\mathcal{Y}|}{K}} \right)
\end{equation}
Let $\Delta \text{Acc} = P(I_{\pi_{\beta_1}} = 1) - P(I_{\pi_{\beta_2}} = 1)$ and $\Delta p^* = p^*_{\beta_1} - p^*_{\beta_2}$.

\paragraph{Confidence Variation}
From the derivation of the Gibbs policy presented in the previous section, we have for all $x$:
\begin{equation}
\frac{\partial p^*_\beta(x)}{\partial \beta} = - \frac{p^*_\beta(x)\,\delta_R(x,\beta)}{\beta^2}
\end{equation}
where $\delta_R(x,\beta) = \mathbb{E}[R \mid y^*] - \mathbb{E}[R] > 0$ in the overconfidence regime.

Consequently, $p^*_\beta(x)$ is strictly decreasing with respect to $\beta$, and its variation is of the first order:
\begin{equation}
|\Delta p^*| \;\gtrsim\; \int_{\beta_2}^{\beta_1} \frac{c(x)}{\beta^2} d\beta,
\end{equation}
for a positive bounding function $c(x) > 0$.

\paragraph{Accuracy Variation}
By definition, the accuracy is given by:
\begin{equation}
\text{Acc}(\beta) = P(\hat Y^{(K)}_\pi = Y \mid X=x).
\end{equation}
The error of the majority vote stems from sampling fluctuations. Using a concentration inequality (Hoeffding's inequality combined with a union bound), we obtain:
\begin{equation}
P(\hat Y^{(K)}_\pi(x) \neq y^*_\beta(x)) \;\le\; |\mathcal{Y}| \exp(-\frac{K}{2} \,\Delta(x,\beta)^2),
\end{equation}
where $\Delta(x,\beta)$ denotes the gap between the probability of the mode and its closest competitor.

Thus, the accuracy can be expressed as:
\begin{equation}
\text{Acc}(\beta) = P(y^*_\beta(X)=Y) \;+\; O\big(\exp(-\frac{K}{2}\,\Delta(X,\beta)^2)\big).
\end{equation}
In particular, as long as the theoretical mode remains unchanged, we have:
\begin{equation}
|\Delta \text{Acc}| \;\le\; O\big(\exp(-cK)\big),
\end{equation}
for a constant $c>0$, provided that $\Delta(x,\beta)$ is uniformly lower-bounded.

\paragraph{Comparison}
We therefore obtain the following scaling behaviors:
\begin{itemize}
    \item Confidence variation:
    \begin{equation}
    |\Delta p^*| \sim \int_{\beta_2}^{\beta_1} \frac{1}{\beta^2} d\beta,
    \end{equation}
    \item Accuracy variation:
    \begin{equation}
    |\Delta \text{Acc}| \le O(\exp(-cK)).
    \end{equation}
\end{itemize}

Hence, for a sufficiently large $K$:
\begin{equation}
|\Delta \text{Acc}| \ll |\Delta p^*|,
\end{equation}
which implies:
\begin{equation}
\Delta p^* - \Delta \text{Acc} < 0.
\end{equation}
We conclude that under this overconfidence regime:
\begin{equation}
\text{ECE}(\pi_{\beta_1}) \le \text{ECE}(\pi_{\beta_2}) + O\left( \sqrt{\frac{\log |\mathcal{Y}|}{K}} \right)
\end{equation}

\end{proof}

\section{Detailed Experimental Results}\label{app:detailed_XP}

\paragraph{Large Reasoning Models and setup parameters.}
All experiments are conducted on \gpt~\citep{openai2025gptoss120bgptoss20bmodel}, \qwen~\citep{yang2025qwen3technicalreport}, \deepseek~\citep{deepseek}, and \closedgpt~\citep{openai2026gpt56}. 
Unless otherwise stated, we use multinomial sampling with a temperature of 0.7. We emphasize that access to the sampling temperature is an artifact of our reproducible open-weight setup, not an assumption of our model: \textsc{J4U} targets deployments exposing text-in/text-out only, where decoding hyperparameters are not user-controllable. This is already the case for \closedgpt, whose API does not expose the sampling temperature; we instead set \texttt{reasoning\_effort=medium} and a maximum of 4096 completion tokens, matching the generation budget used for the other models. For \textsc{Rephrase}, we follow \citet{yang2024just} and use a higher temperature of 1.5 for the rephrasing phase; the rephrasing is always performed by the model under evaluation itself, and for \closedgpt\ it uses \texttt{reasoning\_effort=medium} as temperature cannot be set.

No fine-tuning is performed. The maximum number of generated tokens is set to 4096.
All models are run on a single NVIDIA H100 GPU with 96GB of VRAM. To enable efficient inference, we apply 4-bit quantization for \deepseek, use the native MXFP4 quantization for \gpt, and load \qwen~weights in \texttt{torch.bfloat16} precision. \closedgpt~is accessed through the OpenAI Batch API rather than run locally. 

\paragraph{Datasets and methodological details.}
We evaluate our method and baselines on three datasets. Two are standard question-answering benchmarks designed to assess factual knowledge and reasoning: SuperGPQA\footnote{https://huggingface.co/datasets/m-a-p/SuperGPQA}~\citep{du2025supergpqa} and Humanity’s Last Exam (HLE)\footnote{https://huggingface.co/datasets/cais/hle}~\citep{hle}. The third dataset, SynthPAI\footnote{https://github.com/eth-sri/SynthPAI}~\citep{yukhymenko2024synthetic}, targets tasks with substantial aleatoric uncertainty. It consists of synthetic Reddit posts and is commonly used to evaluate attribute inference attacks~\citep{staab24beyond}, where the goal is to infer implicit author attributes; in this work, we focus on gender prediction. All three datasets are known to be particularly challenging, making them well suited for studying model uncertainty.
For datasets containing more than 1,000 instances, we uniformly subsample 1,000 examples at random. 

Finally, if a model exceeds the maximum generation length or produces an output that does not conform to the required format, the corresponding prediction is set to null and excluded from all metric computations. However, the null ratio between the methods for each dataset remained equivalent throughout.

\paragraph{Metrics computation.}
With K=10 samples, the confidence of a prediction is the fraction of samples supporting the selected answer, so confidences take values in the finite set $\{0,0.1,\dots,1\}$. The same holds for VC under the Avg-Conf aggregation strategy. To compute ECE, we therefore bin by value: each attainable confidence level forms its own bin, and empty bins are discarded. As a result, no two predictions with different confidences are merged. Equal-width and equal-mass schemes are not applicable here, as they would only merge atoms and coarsen the estimate.
For the same reason, the probability assigned to the ground-truth answer is exactly zero whenever that answer is never sampled, which makes the NLL infinite.
We therefore clip probabilities to $[\varepsilon, 1-\varepsilon]$ with $\varepsilon = 10^{-15}$ before taking the logarithm, identically for all methods.
Each such instance contributes $-\log\varepsilon \approx 34.5$, so the NLL is dominated by the fraction of instances whose correct answer is never sampled; we report it for completeness and base our conclusions primarily on ECE and the Brier score.
To compute the Brier score on HLE, which requires a finite set of possible answers for each instance, we restrict to multiple-choice questions in the results reported in the main paper. In the appendix, we additionally evaluate on the open-ended portion of HLE (Appendix~\ref{app:hleopen}) and on \texttt{gsm8k}~\citep{cobbe2021gsm8k}\footnote{https://huggingface.co/datasets/openai/gsm8k} (Appendix~\ref{app:easybench}); since neither setting admits a finite answer set, we report accuracy, ECE, and NLL only, and find consistent trends with the main-paper results.

\subsection{Accuracy and confidence calibration per LRM and dataset}
\label{app:detailacc}

\begin{table*}[!h]
\centering
\small

\caption{Comparison of UQ techniques on \qwen. Bold indicates statistically significant improvements w.r.t. \textsc{RS} by bootstrap test with $\alpha = 0.05$.}
\label{tab:acc_qwen}
\begin{tabular}{llccccc}
\toprule
Dataset & Method & Accuracy $\uparrow$ & ECE $\downarrow$ & Brier $\downarrow$ & NLL $\downarrow$ \\ 
\midrule
\multirow{6}{*}{\textit{SynthPAI}}
 & \textsc{RS} & \mesure{0.56}{0.03} & \mesure{0.34}{0.03} & \mesure{0.37}{0.03} & \mesure{8.27}{0.88} \\
 & VC  & \mesure{0.57}{0.03} & \mesure{0.33}{0.03} & \mesure{0.36}{0.03} & \mesure{8.30}{0.85} \\
 & \textsc{Rephrase}  & \mesure{0.56}{0.03} & \mesure{0.32}{0.03} & \mesure{0.36}{0.02} & \mesure{7.30}{0.88} \\
 & \textsc{J4U-Suffix}  & \mesure{0.56}{0.03} & \mesure{0.30}{0.03} & \mesure{0.35}{0.02} & \mesurebf{6.19}{0.83} \\
 & \textsc{J4U-Prog}  & \mesure{0.58}{0.03} & \mesurebf{0.23}{0.03} & \mesurebf{0.31}{0.02} & \mesurebf{3.91}{0.64} \\
 & \textsc{J4U-Art}  & \mesure{0.57}{0.03} & \mesurebf{0.25}{0.03} & \mesurebf{0.32}{0.02} & \mesurebf{4.22}{0.64} \\
\midrule
\multirow{6}{*}{\textit{SuperGPQA}} 
 & \textsc{RS} & \mesure{0.45}{0.03} & \mesure{0.39}{0.03} & \mesure{0.09}{0.01} & \mesure{13.79}{1.11} \\
 & VC  & \mesure{0.45}{0.03} & \mesure{0.42}{0.03} & \mesure{0.09}{0.01} & \mesure{13.93}{1.11} \\
 & \textsc{Rephrase}  & \mesure{0.44}{0.03} & \mesure{0.39}{0.03} & \mesure{0.09}{0.01} & \mesure{14.21}{1.08} \\
 & \textsc{J4U-Suffix}  & \mesure{0.44}{0.03} & \mesure{0.41}{0.03} & \mesure{0.09}{0.01} & \mesure{14.12}{1.11} \\
 & \textsc{J4U-Prog}  & \mesure{0.41}{0.03} & \mesurebf{0.25}{0.03} & \mesure{0.08}{0.01} & \mesure{11.99}{1.02} \\
 & \textsc{J4U-Art} & \mesure{0.40}{0.03} & \mesurebf{0.33}{0.03} & \mesure{0.09}{0.01} & \mesure{13.97}{1.03} \\
\midrule
\multirow{6}{*}{\shortstack[l]{\textit{Humanity's}\\\textit{Last Exam}}} 
 & \textsc{RS}& \mesure{0.12}{0.04} & \mesure{0.73}{0.04} & \mesure{0.29}{0.01} & \mesure{23.34}{1.57}\\
 & VC  & \mesure{0.10}{0.04} & \mesure{0.78}{0.04} & \mesure{0.30}{0.01} & \mesure{23.13}{1.73} \\
 & \textsc{Rephrase}  & \mesure{0.17}{0.04} & \mesure{0.67}{0.05} & \mesure{0.28}{0.01} & \mesure{21.32}{1.81} \\
 & \textsc{J4U-Suffix}  & \mesure{0.13}{0.04} & \mesure{0.74}{0.04} & \mesure{0.30}{0.01} & \mesure{22.96}{1.61} \\
 & \textsc{J4U-Prog}  & \mesure{0.11}{0.03} & \mesurebf{0.59}{0.04} & \mesurebf{0.25}{0.01} & \mesurebf{17.32}{1.54} \\
 & \textsc{J4U-Art}  & \mesure{0.12}{0.04} & \mesurebf{0.66}{0.04} & \mesurebf{0.27}{0.01} & \mesurebf{18.85}{1.65} \\
\bottomrule
\end{tabular}
\end{table*}

\begin{table*}[!h]

\caption{Comparison of UQ techniques on \gpt. Bold indicates statistically significant improvements w.r.t.  \textsc{RS}  by bootstrap test with $\alpha = 0.05$.}
\label{tab:acc_gpt}
\centering
\small
\begin{tabular}{llccccc}
\toprule
Dataset & Method & Accuracy $\uparrow$ & ECE $\downarrow$ & Brier $\downarrow$ & NLL $\downarrow$ \\ 
\midrule
\multirow{6}{*}{\textit{SynthPAI}}
 & \textsc{RS} & \mesure{0.61}{0.03} & \mesure{0.26}{0.03} & \mesure{0.31}{0.02} & \mesure{5.41}{0.74} \\
 & \textsc{VC}  & \mesure{0.59}{0.03} & \mesure{0.28}{0.03} & \mesure{0.32}{0.03} & \mesure{5.85}{0.78} \\
 & \textsc{Rephrase}  & \mesure{0.57}{0.03} & \mesure{0.26}{0.03} & \mesure{0.32}{0.02} & \mesure{4.72}{0.71} \\
 & \textsc{J4U-Suffix}  & \mesure{0.60}{0.03} & \mesure{0.22}{0.03} & \mesure{0.30}{0.02} & \mesurebf{3.63}{0.64} \\
 & \textsc{J4U-Prog}  & \mesure{0.60}{0.03} & \mesure{0.24}{0.03} & \mesure{0.31}{0.02} & \mesure{4.75}{0.72} \\
 & \textsc{J4U-Art}  & \mesure{0.60}{0.03} & \mesure{0.24}{0.03} & \mesure{0.30}{0.02} & \mesure{4.21}{0.65} \\
\midrule
\multirow{6}{*}{\textit{SuperGPQA}} 
 & \textsc{RS} & \mesure{0.47}{0.03} & \mesure{0.33}{0.03} & \mesure{0.08}{0.01} & \mesure{12.06}{0.98} \\
 & \textsc{VC}  & \mesure{0.47}{0.03} & \mesure{0.38}{0.03} & \mesure{0.09}{0.01} & \mesure{11.75}{0.96} \\
 & \textsc{Rephrase}  & \mesure{0.44}{0.03} & \mesurebf{0.27}{0.03} & \mesure{0.08}{0.01} & \mesure{10.77}{0.96} \\
 & \textsc{J4U-Suffix}  & \mesure{0.46}{0.03} & \mesure{0.33}{0.03} & \mesure{0.08}{0.01} & \mesure{12.07}{0.93} \\
 & \textsc{J4U-Prog}  & \mesure{0.43}{0.03} & \mesurebf{0.25}{0.03} & \mesure{0.08}{0.01} & \mesure{10.85}{0.99} \\
 & \textsc{J4U-Art} & \mesure{0.44}{0.03} & \mesure{0.30}{0.03} & \mesure{0.08}{0.01} & \mesure{11.34}{0.94} \\
\midrule
\multirow{6}{*}{\shortstack[l]{\textit{Humanity's}\\\textit{Last Exam}}} 
 & \textsc{RS}& \mesure{0.13}{0.03} & \mesure{0.72}{0.04} & \mesure{0.29}{0.01} & \mesure{22.38}{1.59} \\
 & \textsc{VC}  & \mesure{0.13}{0.03} & \mesure{0.76}{0.04} & \mesure{0.30}{0.01} & \mesure{22.98}{1.63} \\
 & \textsc{Rephrase}  & \mesure{0.12}{0.03} & \mesure{0.72}{0.04} & \mesure{0.29}{0.01} & \mesure{22.81}{1.65} \\
 & \textsc{J4U-Suffix}  & \mesure{0.14}{0.03} & \mesure{0.71}{0.04} & \mesure{0.29}{0.01} & \mesure{22.34}{1.61} \\
 & \textsc{J4U-Prog}  & \mesure{0.11}{0.03} & \mesure{0.67}{0.04} & \mesure{0.28}{0.01} & \mesure{21.33}{1.56} \\
 & \textsc{J4U-Art}  & \mesure{0.17}{0.04} & \mesurebf{0.58}{0.04} & \mesurebf{0.26}{0.01} & \mesure{19.68}{1.61} \\
\bottomrule
\end{tabular}
\end{table*}

\begin{table*}[!h]

\caption{Comparison of UQ techniques on \deepseek. Bold indicates statistically significant improvements w.r.t.  \textsc{RS}  by bootstrap test with $\alpha = 0.05$. Red indicates significant deterioration.}

\label{tab:acc_deepseek}
\centering
\small
\begin{tabular}{llccccc}
\toprule
Dataset & Method & Accuracy $\uparrow$ & ECE $\downarrow$ & Brier $\downarrow$ & NLL $\downarrow$ \\ 
\midrule
\multirow{6}{*}{\textit{SynthPAI}}
 & \textsc{RS}  & \mesure{0.57}{0.03} & \mesure{0.30}{0.03} & \mesure{0.35}{0.02} & \mesure{7.25}{0.83} \\
 & \textsc{VC}  & \mesure{0.57}{0.03} & \mesure{0.31}{0.03} & \mesure{0.35}{0.02} & \mesure{7.30}{0.82} \\
 & \textsc{Rephrase}  & \mesure{0.53}{0.03} & \mesure{0.32}{0.03} & \mesure{0.35}{0.02} & \mesurebf{5.63}{0.75} \\
 & \textsc{J4U-Suffix}  & \mesure{0.57}{0.03} & \mesure{0.28}{0.03} & \mesure{0.33}{0.02} & \mesure{6.01}{0.78} \\
 & \textsc{J4U-Prog}  & \mesure{0.58}{0.03} & \mesurebf{0.25}{0.03} & \mesure{0.32}{0.02} & \mesurebf{4.71}{0.65} \\
 & \textsc{J4U-Art}  & \mesure{0.58}{0.03} & \mesurebf{0.25}{0.03} & \mesurebf{0.31}{0.02} & \mesurebf{4.41}{0.62} \\
\midrule
\multirow{6}{*}{\textit{SuperGPQA}} 
 & \textsc{RS}  & \mesure{0.50}{0.04} & \mesure{0.33}{0.03} & \mesure{0.08}{0.01} & \mesure{11.79}{1.26} \\
 & \textsc{VC}  & \mesure{0.51}{0.05} & \mesure{0.35}{0.04} & \mesure{0.08}{0.01} & \mesure{13.31}{1.45} \\
 & \textsc{Rephrase}  & \textcolor{red}{\mesure{0.29}{0.03}} & \mesure{0.28}{0.03} & \textcolor{red}{\mesure{0.10}{0.01}} & \textcolor{red}{\mesure{16.93}{1.05}} \\
 & \textsc{J4U-Suffix}  & \mesure{0.50}{0.04} & \mesure{0.32}{0.04} & \mesure{0.08}{0.01} & \mesure{12.11}{1.15} \\
 & \textsc{J4U-Prog}  & \mesure{0.44}{0.04} & \mesure{0.30}{0.04} & \mesure{0.09}{0.01} & \mesure{13.05}{1.21} \\
 & \textsc{J4U-Art} & \mesure{0.43}{0.04} & \mesure{0.34}{0.04} & \mesure{0.09}{0.01} & \mesure{14.41}{1.35} \\
\midrule
\multirow{6}{*}{\shortstack[l]{\textit{Humanity's}\\\textit{Last Exam}}} 
 & \textsc{RS} & \mesure{0.07}{0.03} & \mesure{0.80}{0.04} & \mesure{0.31}{0.01} & \mesure{24.85}{2.01} \\
 & \textsc{VC}  & \mesure{0.08}{0.04} & \mesure{0.80}{0.04} & \mesure{0.32}{0.01} & \mesure{24.54}{2.01} \\
 & \textsc{Rephrase}  & \mesure{0.11}{0.04} & \mesurebf{0.71}{0.05} & \mesure{0.29}{0.01} & \mesure{22.58}{2.04} \\
 & \textsc{J4U-Suffix}  & \mesure{0.09}{0.04} & \mesure{0.77}{0.04} & \mesure{0.30}{0.01} & \mesure{24.27}{1.76} \\
 & \textsc{J4U-Prog}  & \mesure{0.11}{0.04} & \mesurebf{0.67}{0.04} & \mesurebf{0.28}{0.01} & \mesure{22.87}{1.87} \\
 & \textsc{J4U-Art}  & \mesurebf{0.14}{0.04} & \mesurebf{0.57}{0.05} & \mesurebf{0.26}{0.01} & \mesurebf{20.67}{2.0} \\
\bottomrule
\end{tabular}
\end{table*}

\begin{table*}[!h]

\caption{Comparison of UQ techniques on \closedgpt. Bold indicates statistically significant improvements w.r.t.  \textsc{RS}  by bootstrap test with $\alpha = 0.05$. Red indicates significant deterioration.}

\label{tab:acc_closedgpt}
\centering
\small 
\begin{tabular}{llccccc}
\toprule
Dataset & Method & Accuracy $\uparrow$ & ECE $\downarrow$ & Brier $\downarrow$ & NLL $\downarrow$ \\ 
\midrule
\multirow{6}{*}{\textit{SynthPAI}}
 & \textsc{RS}  & \mesure{0.61}{0.03} & \mesure{0.32}{0.03} & \mesure{0.35}{0.03} & \mesure{9.86}{0.99} \\
 & \textsc{VC}  & \mesure{0.59}{0.03} & \mesure{0.37}{0.04} & \mesure{0.39}{0.03} & \mesure{11.67}{1.2} \\
 & \textsc{Rephrase}  & \mesure{0.57}{0.03} & \mesure{0.36}{0.03} & \mesure{0.38}{0.03} & \mesure{10.61}{0.96} \\
 & \textsc{J4U-Suffix}  & \mesure{0.60}{0.03} & \mesure{0.30}{0.03} & \mesure{0.33}{0.03} & \mesurebf{7.94}{0.98} \\
 & \textsc{J4U-Prog}  & \mesure{0.59}{0.03} & \mesure{0.30}{0.03} & \mesure{0.34}{0.02} & \mesure{8.63}{0.94} \\
 & \textsc{J4U-Art}  & \mesure{0.61}{0.03} & \mesurebf{0.26}{0.03} & \mesure{0.32}{0.03} & \mesurebf{7.45}{0.84} \\
\midrule
\multirow{6}{*}{\textit{SuperGPQA}} 
 & \textsc{RS}  & \mesure{0.68}{0.03} & \mesure{0.23}{0.03} & \mesure{0.06}{0.01} & \mesure{8.68}{0.92} \\
 & \textsc{VC}  & \mesure{0.67}{0.03} & \mesure{0.27}{0.03} & \mesure{0.06}{0.01} & \mesure{9.67}{0.99} \\
 & \textsc{Rephrase}  & \mesure{0.68}{0.03} & \mesure{0.23}{0.03} & \mesure{0.06}{0.01} & \mesure{8.53}{0.94} \\
 & \textsc{J4U-Suffix}  & \mesure{0.68}{0.03} & \mesure{0.23}{0.03} & \mesure{0.06}{0.01} & \mesure{8.56}{0.93} \\
 & \textsc{J4U-Prog}  & \mesure{0.65}{0.03} & \mesure{0.20}{0.03} & \mesure{0.06}{0.01} & \mesure{8.17}{0.90} \\
 & \textsc{J4U-Art} & \mesure{0.64}{0.03} & \mesure{0.20}{0.03} & \mesure{0.06}{0.01} & \mesure{8.60}{0.87} \\
\midrule
\multirow{6}{*}{\shortstack[l]{\textit{Humanity's}\\\textit{Last Exam}}} 
 & \textsc{RS} & \mesure{0.26}{0.04} & \mesure{0.57}{0.04} & \mesure{0.13}{0.01} & \mesure{20.99}{1.38} \\
 & \textsc{VC}  & \mesure{0.24}{0.04} & \textcolor{red}{\mesure{0.69}{0.04}} & \mesure{0.14}{0.01} & \mesure{22.87}{1.61} \\
 & \textsc{Rephrase}  & \mesure{0.25}{0.04} & \mesure{0.58}{0.04} & \mesure{0.13}{0.01} & \mesure{20.44}{1.50} \\
 & \textsc{J4U-Suffix}  & \mesure{0.25}{0.04} & \mesure{0.61}{0.04} & \mesure{0.13}{0.01} & \textcolor{red}{\mesure{24.90}{1.35}} \\
 & \textsc{J4U-Prog}  & \mesure{0.22}{0.04} & \mesure{0.56}{0.04} & \mesure{0.13}{0.01} & \mesure{20.35}{1.45} \\
 & \textsc{J4U-Art}  & \mesure{0.23}{0.04} & \mesure{0.55}{0.04} & \mesure{0.13}{0.01} & \mesure{19.25}{1.45} \\
\bottomrule
\end{tabular}
\end{table*}

This subsection reports detailed Accuracy, ECE, Brier, and NLL values, aggregated in Table~\ref{tab:summary_percentage_metrics}. Tables~\ref{tab:acc_qwen},~\ref{tab:acc_gpt},~\ref{tab:acc_deepseek}, and~\ref{tab:acc_closedgpt} present the per-method and per-dataset results for \qwen, \gpt, \deepseek, and \closedgpt, respectively. All results are reported for $K = 10$.

Bold values indicate statistically significant improvements with respect to the baseline, as determined by a bootstrap difference test with $\alpha = 0.05$.

Most improvements are observed on \qwen~and \deepseek. We hypothesize that \gpt~and \closedgpt~may be more robust to jailbreak perturbations, which could explain the comparatively smaller gains. We note that, even on \closedgpt, a recent production-grade LRM, \textsc{J4U-Prog} and \textsc{J4U-Art} do not degrade UQ. On the contrary, they improve ECE and NLL on every dataset, and Brier specifically on SynthPAI. These improvements however reach statistical significance only for \textsc{J4U-Art}, and only in two settings (SynthPAI ECE and NLL).

Among the three evaluated jailbreak-based transformations, \textsc{J4U-Suffix} consistently underperforms relative to \textsc{J4U-Prog} and \textsc{J4U-Art}. This weaker performance can be attributed to its simplified design. Whereas \textsc{J4U-Prog} and \textsc{J4U-Art} are direct adaptations of state-of-the-art jailbreak strategies, \textsc{J4U-Suffix} constitutes a simplified variant of GCG~\cite{zou2023universaltransferableadversarialattacks}. In its original formulation, GCG operates in a white-box setting and leverages gradient-based optimization to iteratively construct adversarial suffixes. In contrast, \textsc{J4U-Suffix} relies on randomly sampled suffixes, substantially reducing its attack strength.

The fact that the only simplified jailbreak underperforms suggests that jailbreak effectiveness correlates with operator strength: higher-quality jailbreaks appear to induce stronger effects and greater improvements in calibration.

\subsection{Detailed evaluation on HLE - open-ended questions}
\label{app:hleopen}
Table~\ref{tab:hleopen} reports accuracy, ECE, and NLL on open-ended questions of the HLE dataset belonging to the exact-match category, to simplify correctness assessment. Results are consistent with those on multiple-choice questions: no \textsc{J4U} instantiation significantly degrades accuracy. \textsc{J4U-Prog} improves ECE in all 4 settings, with statistically significant gains in 2 of them (\qwen, \deepseek). \textsc{J4U-Art} improves ECE in 3 of 4 settings, all statistically significant (\qwen, \gpt, \deepseek), though it shows no improvement on \closedgpt. As in the main-paper results, \textsc{VC} shows statistically significant ECE degradation relative to \textsc{RS} on two settings (\gpt, \closedgpt). No method achieves statistically significant NLL improvements in this setting.

\begin{table*}[!h]
\centering

\caption{Accuracy and confidence calibration on HLE (exact match). Bold indicates statistically significant improvements w.r.t.  \textsc{RS}  by bootstrap test with $\alpha = 0.05$. Red indicates significant deterioration.}
\begin{tabular}{llcccc}
\toprule
{LRM} & Method & Accuracy $\uparrow$ & ECE $\downarrow$ & NLL $\downarrow$ \\ 
\midrule
\multirow{6}{*}{\textit{\qwen}}
 & \textsc{RS}   & \mesure{0.02}{0.01} & \mesure{0.72}{0.03} & \mesure{8.04}{1.16} \\
 & \textsc{VC}  & \mesurebf{0.05}{0.02} &  \mesure{0.73}{0.03} & \mesure{8.17}{1.22} \\
 & \textsc{Rephrase}  & \mesure{0.02}{0.01} &  \mesurebf{0.61}{0.02} & \mesure{8.17}{0.97} \\
 &\textsc{J4U-Suffix}  & \mesure{0.03}{0.01} &  \mesure{0.68}{0.03} & \mesure{8.32}{1.22} \\
 &\textsc{J4U-Prog}  & \mesure{0.02}{0.01} &  \mesurebf{0.62}{0.02} & \mesure{8.60}{0.98} \\
 &\textsc{J4U-Art}  & \mesure{0.01}{0.01} &  \mesurebf{0.53}{0.02} & \mesure{8.36}{0.99} \\
 \midrule
 \multirow{6}{*}{\textit{\gpt}}
 & \textsc{RS}   & \mesure{0.03}{0.02} & \mesure{0.63}{0.03} & \mesure{8.05}{1.24} \\
 & \textsc{VC}  & \mesure{0.03}{0.01} &  \mesure{\color{red}0.73}{0.02} & \mesure{8.01}{1.17} \\
 & \textsc{Rephrase}  & \mesure{0.02}{0.01} &  \mesure{0.60}{0.03} & \mesure{7.57}{1.20} \\
 &\textsc{J4U-Suffix}  & \mesure{0.03}{0.01} &  \mesure{0.60}{0.03} & \mesure{7.62}{1.19} \\
 &\textsc{J4U-Prog}  & \mesure{0.02}{0.01} &  \mesure{0.60}{0.02} & \mesure{8.38}{1.15} \\
 &\textsc{J4U-Art}  & \mesure{0.03}{0.01} &  \mesurebf{0.56}{0.03} & \mesure{8.42}{1.17} \\
    \midrule
 \multirow{6}{*}{\textit{\deepseek}}
 & \textsc{RS}   & \mesure{0.02}{0.01} & \mesure{0.61}{0.03} & \mesure{8.07}{1.40} \\
 & \textsc{VC}  & \mesure{0.03}{0.02} &  \mesure{0.69}{0.04} & \mesure{7.16}{1.86} \\
 & \textsc{Rephrase}  & \mesure{0.03}{0.02} &  \mesure{0.57}{0.03} & \mesure{7.68}{1.21} \\
 &\textsc{J4U-Suffix}  & \mesure{0.02}{0.01} &  \mesure{0.60}{0.03} & \mesure{8.19}{1.36} \\
 &\textsc{J4U-Prog}  & \mesure{0.04}{0.02} &  \mesurebf{0.55}{0.03} & \mesure{7.88}{1.29} \\
 &\textsc{J4U-Art}  & \mesure{0.02}{0.01} &  \mesurebf{0.48}{0.03} & \mesure{8.22}{1.34} \\
 \midrule
 \multirow{6}{*}{\textit{\closedgpt}}
 & \textsc{RS}   & \mesure{0.16}{0.02} & \mesure{0.51}{0.03} & \mesure{5.37}{0.77} \\
 & \textsc{VC}  & \mesure{0.18}{0.03} &  \mesure{\color{red} 0.61}{0.03} & \mesure{5.31}{0.43} \\
 & \textsc{Rephrase}  & \mesure{0.15}{0.02} &  \mesure{0.50}{0.03} & \mesure{5.41}{0.77} \\
 &\textsc{J4U-Suffix}  & \mesure{0.15}{0.02} &  \mesure{0.50}{0.03} & \mesure{5.16}{0.75} \\
 &\textsc{J4U-Prog}  & \mesure{0.15}{0.02} &  \mesure{0.48}{0.03} & \mesure{5.34}{0.83} \\
 &\textsc{J4U-Art}  & \mesure{0.14}{0.02} &  \mesure{0.51}{0.02} & \mesure{5.74}{0.80} \\
\bottomrule
\end{tabular}
\label{tab:hleopen}
\end{table*}

\subsection{Accuracy and confidence calibration as a function of the number of iterations}

\label{app:confidencePerIteration}

All results reported in the paper correspond to $K = 10$. Figures~\ref{fig:conv} illustrate accuracy and confidence calibration as a function of the number of iterations, for $K = 1, \dots, 10$, on four representative LRM–dataset pairs such that each dataset and each LRM is covered.

\begin{figure}[!ht]
    \centering
    \includegraphics[width=1\linewidth]{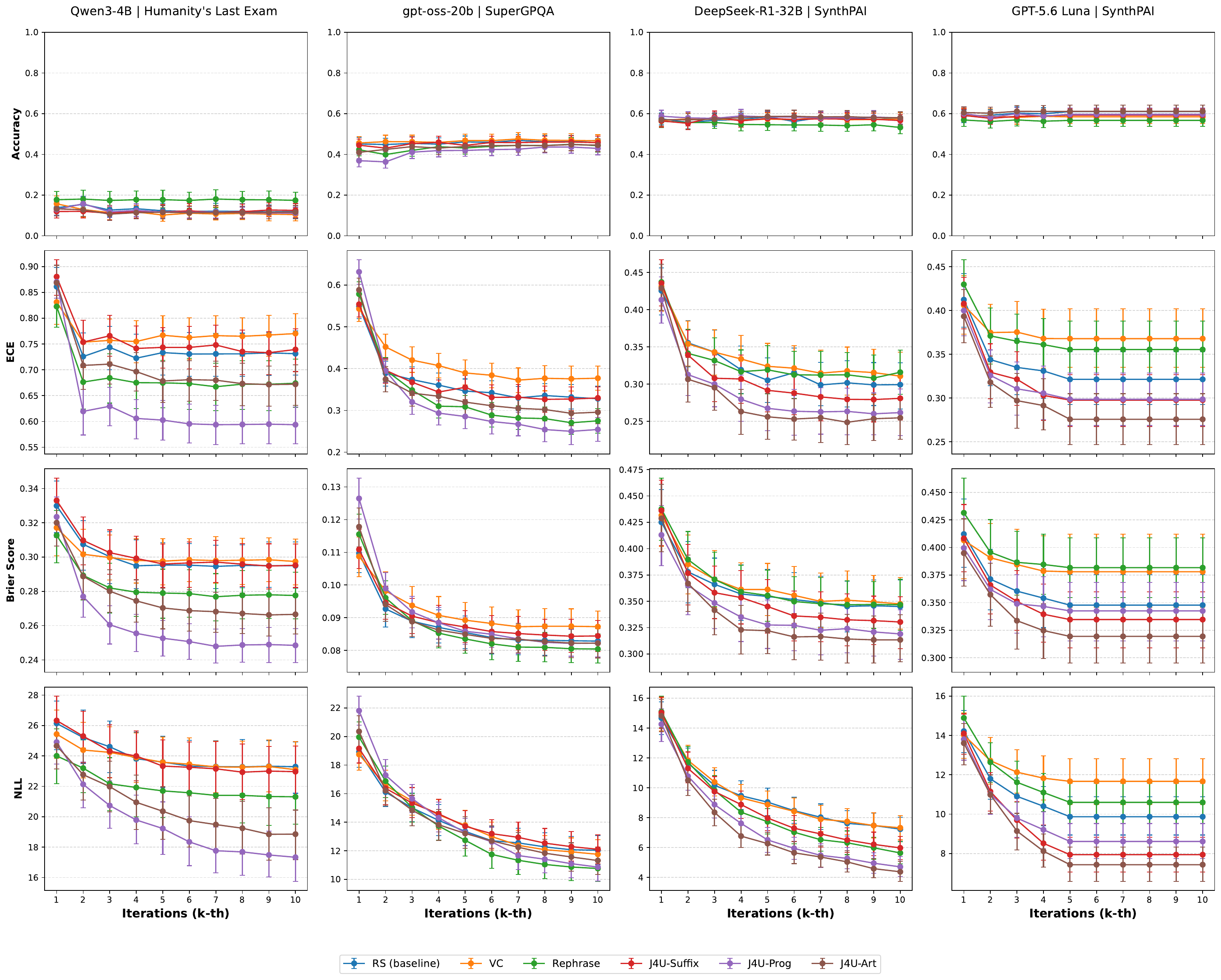}
    \caption{Accuracy and Confidence Calibration as a Function of the Number of Iteration on 4 Representative LRM-Dataset Pairs.}
    \label{fig:conv}
\end{figure}

At $K = 1$, all self-consistency-based approaches assign 100\% confidence to their predictions, as confidence is estimated from a single sample. This naturally leads to misalignment and overconfidence at $K=1$. Increasing $K$ introduces variance in the sampled answers, enabling frequency-based confidence estimates and thereby reducing overconfidence. 

Across methods, convergence dynamics are qualitatively similar. Accuracy remains largely stable as $K$ increases, suggesting that additional samples primarily refine confidence estimates rather than altering the predicted answer. In contrast, ECE and Brier decrease sharply for small $K$, consistent with improved confidence calibration as sample size grows. After approximately $K \approx 5$, improvements become marginal, indicating diminishing returns from additional sampling. In contrast, NLL decreases more gradually, and it remains unclear whether it has fully plateaued at $K = 10$.

\subsection{Empirical Signatures Predicted by the Relaxation Interpretation}

\subsubsection{Flattening output distribution}
\label{app:entropy}

Table~\ref{tab:comparison_llms_entropy} reports the Shannon entropy statistics summarized in Table~\ref{tab:summary_entropy_combined}. For each LRM--dataset pair, we provide the mean and variance of the entropy distribution induced by each policy, together with the $p$-value of the one-side Mann–Whitney U test comparing entropy values against $\pi_\text{RL}$ (the policy defined by \textsc{RS}).

\textsc{J4U-Suffix} and \textsc{Rephrase} fail to reach statistical significance in 6 and 3 out of 12 LRM--dataset configurations, respectively. The entropy gains induced by \textsc{J4U-Suffix} remain modest overall, and even slightly decrease entropy on \qwen--SuperGPQA and \closedgpt--HLE. 

\textsc{J4U-Art} and \textsc{J4U-Prog} yields statistically significant entropy increases across 11 and 12 out of 12 settings, respectively, in some cases doubling entropy relative to \textsc{RS} (e.g., \qwen\ on SuperGPQA). Overall, the highest relative entropy increases is achieved by \textsc{Rephrase} on SuperGPQA--\deepseek ($\times 2.6$). Notably, in this configuration, the entropy gain is accompanied by a significant degradation in accuracy, Brier score, and NLL (see Appendix~\ref{app:detailacc}).  

We hypothesize that, in this setting, \textsc{Rephrase} alters or weakens the semantic structure of the questions, leading to less constrained reasoning and consequently to highly diverse (but less reliable) answers.

\newpage
\begin{table*}[!h]
\centering
\caption{Mean and variance of Shannon entropy across policies, together with p-value of one-side Mann–Whitney U test against RS.}
\noindent\hspace*{-2cm}
\small
\setlength{\tabcolsep}{4pt}
\begin{tabular}{ll ccc ccc ccc ccc}
\toprule
 & & \multicolumn{3}{c}{\qwen} & \multicolumn{3}{c}{\gpt} & \multicolumn{3}{c}{\deepseek} & \multicolumn{3}{c}{\closedgpt} \\
\cmidrule(lr){3-5} \cmidrule(lr){6-8} \cmidrule(lr){9-11} \cmidrule(lr){12-14}
Dataset & Method & Mean & Var. & p-value & Mean & Var. & p-value & Mean & Var. & p-value & Mean & Var. & p-value \\
\midrule
\multirow{5}{*}{\textit{SynthPAI}}
 & \textsc{RS} & 0.22 & 0.07 &  & 0.30 & 0.08 &  & 0.26 & 0.08 &  & 0.14 & 0.06 &  \\
 & Rephrase & 0.26 & 0.08 & \floatbf{7}{-7} & 0.35 & 0.07 & \floatbf{7}{-11} & 0.31 & 0.08 & \floatbf{5}{-12} & 0.15 & 0.07 & \float{6}{-2} \\
 & \textsc{J4U-Suffix} & 0.28 & 0.07 & \floatbf{3}{-19} & 0.37 & 0.07 & \floatbf{1}{-17} & 0.30 & 0.08 & \floatbf{2}{-11} & 0.22 & 0.08 & \floatbf{7}{-17} \\
 & \textsc{J4U-Prog} & 0.39 & 0.07 & \floatbf{2}{-35} & 0.34 & 0.08 & \floatbf{1}{-5} & 0.33 & 0.07 & \floatbf{1}{-10} & 0.21 & 0.08 & \floatbf{1}{-11} \\
 & \textsc{J4U-Art} & 0.38 & 0.07 & \floatbf{3}{-32} & 0.35 & 0.07 & \floatbf{4}{-8} & 0.35 & 0.07 & \floatbf{2}{-17} & 0.22 & 0.08 & \floatbf{8}{-16} \\
\midrule
\multirow{5}{*}{\textit{SuperGPQA}}
 & \textsc{RS} & 0.33 & 0.16 &  & 0.46 & 0.23 &  & 0.32 & 0.17 &  & 0.18 & 0.11 &  \\
 & Rephrase & 0.38 & 0.18 & \floatbf{5}{-7} & 0.65 & 0.27 & \floatbf{1}{-37} & 0.84 & 0.18 & \floatbf{3}{-86} & 0.20 & 0.11 & \floatbf{7}{-3} \\
 & \textsc{J4U-Suffix} & 0.32 & 0.16 & \float{9}{-1} & 0.48 & 0.23 & \floatbf{4}{-2} & 0.36 & 0.18 & \floatbf{1}{-3} & 0.19 & 0.11 & \float{8}{-2} \\
 & \textsc{J4U-Prog} & 0.78 & 0.28 & \floatbf{5}{-97} & 0.76 & 0.27 & \floatbf{1}{-70} & 0.54 & 0.22 & \floatbf{3}{-33} & 0.33 & 0.16 & \floatbf{5}{-29} \\
 & \textsc{J4U-Art} & 0.64 & 0.30 & \floatbf{3}{-71} & 0.60 & 0.28 & \floatbf{5}{-21} & 0.46 & 0.23 & \floatbf{6}{-17} & 0.35 & 0.18 & \floatbf{5}{-35} \\
\midrule
\multirow{5}{*}{\shortstack[l]{\textit{Humanity's}\\\textit{Last Exam}}}
 & \textsc{RS} & 0.31 & 0.14 &  & 0.30 & 0.14 &  & 0.24 & 0.12 &  & 0.28 & 0.12 &  \\
 & Rephrase & 0.32 & 0.15 & \float{7}{-1} & 0.48 & 0.18 & \floatbf{3}{-14} & 0.52 & 0.18 & \floatbf{7}{-18} & 0.29 & 0.13 & \float{4}{-1} \\
 & \textsc{J4U-Suffix} & 0.31 & 0.14 & \float{3}{-1} & 0.30 & 0.14 & \float{7}{-1} & 0.27 & 0.12 & \float{6}{-2} & 0.24 & 0.12 & \float{1}{0} \\
 & \textsc{J4U-Prog} & 0.62 & 0.19 & \floatbf{9}{-31} & 0.43 & 0.16 & \floatbf{1}{-8} & 0.42 & 0.17 & \floatbf{6}{-12} & 0.38 & 0.14 & \floatbf{3}{-8} \\
 & \textsc{J4U-Art} & 0.47 & 0.18 & \floatbf{6}{-14} & 0.32 & 0.16 & \float{2}{-1} & 0.33 & 0.16 & \floatbf{8}{-5} & 0.36 & 0.15 & \floatbf{3}{-7} \\
\bottomrule
\end{tabular}

\label{tab:comparison_llms_entropy}
\end{table*}

\subsubsection{Diversification of reasoning traces}
\label{app:cot}

Table~\ref{tab:comparison_llms} reports the mean and variance of pairwise cosine similarity between embeddings of reasoning traces generated by each policy, as summarized in Table~\ref{tab:summary_cosine_combined}. Reasoning traces embeddings are generated using \texttt{Qwen3-Embedding-8B}
\footnote{https://huggingface.co/Qwen/Qwen3-Embedding-8B}
~\citep{qwen3embedding}, without any system prompt. \textsc{VC} is not reported as the first prompt is exactly the same as \textsc{RS}.

Consistent with the entropy analysis, \textsc{J4U-Suffix} and \textsc{Rephrase} generally induce only modest reductions in cosine similarity, suggesting limited diversification of reasoning traces. In fact, they increase cosine similarity in 5 and 3 out of 9 LRM--dataset settings, respectively.

By contrast, \textsc{J4U-Art} reduces cosine similarity in 8 out of 9 settings and \textsc{J4U-Prog} in 7 out of 9. The exceptions occur on \deepseek, where \textsc{RS} already exhibits relatively low baseline similarity on SuperGPQA (0.52) and HLE (0.47), leaving limited room for further diversification. The most pronounced reduction is observed with \textsc{J4U-Art} on SynthPAI across all LRMs, where cosine similarity is approximately halved.

Interestingly, $\pi_\text{RL}$  exhibit high reasoning stability on SynthPAI for all LRMs, with \gpt~ and \deepseek~ showing the highest cosine similarity on this dataset, despite its high aleatoric uncertainty and the near-random accuracy achieved across models.

\begin{table*}[!ht]
\centering

\caption{Mean and variance of average cosine similarity between CoT embeddings across policies.}
\small
\setlength{\tabcolsep}{6pt}
\begin{tabular}{ll cc cc cc}
\toprule
 & & \multicolumn{2}{c}{\qwen} & \multicolumn{2}{c}{\gpt} & \multicolumn{2}{c}{\deepseek} \\
\cmidrule(lr){3-4} \cmidrule(lr){5-6} \cmidrule(lr){7-8}
Dataset & Method & Mean & Var. & Mean & Var. & Mean & Var. \\
\midrule
\multirow{5}{*}{\textit{SynthPAI}}
 & RS   & 0.67 & 0.15 & 0.85 & 0.09 & 0.73 & 0.14 \\
 & Rephrase    & 0.63 & 0.16 & 0.87 & 0.09 & 0.72 & 0.15 \\
 & \textsc{J4U-Suffix}  & 0.53 & 0.15 & 0.77 & 0.13 & 0.65 & 0.15 \\
 & \textsc{J4U-Prog}    & 0.48 & 0.14 & 0.65 & 0.15 & 0.61 & 0.16 \\
 & \textsc{J4U-Art}     & 0.34 & 0.07 & 0.45 & 0.12 & 0.35 & 0.14 \\
\midrule
\multirow{5}{*}{\textit{SuperGPQA}} 
 & RS   & 0.73 & 0.14 & 0.80 & 0.11 & 0.52 & 0.14 \\
 & Rephrase    & 0.71 & 0.14 & 0.78 & 0.12 & 0.58 & 0.17 \\
 & \textsc{J4U-Suffix}  & 0.74 & 0.13 & 0.81 & 0.10 & 0.55 & 0.15 \\
 & \textsc{J4U-Prog}   & 0.61 & 0.16 & 0.66 & 0.15 & 0.53 & 0.15 \\
 & \textsc{J4U-Art}     & 0.50 & 0.15 & 0.62 & 0.15 & 0.49 & 0.15 \\
\midrule
\multirow{5}{*}{\shortstack[l]{\textit{Humanity's}\\\textit{Last Exam}}} 
 & RS   & 0.63 & 0.15 & 0.62 & 0.14 & 0.47 & 0.13 \\
 & Rephrase    & 0.57 & 0.15 & 0.57 & 0.15 & 0.59 & 0.17 \\
 & \textsc{J4U-Suffix}  & 0.65 & 0.15 & 0.65 & 0.15 & 0.47 & 0.13 \\
 & \textsc{J4U-Prog}    & 0.59 & 0.16 & 0.43 & 0.11 & 0.47 & 0.14 \\
 & \textsc{J4U-Art}     & 0.57 & 0.16 & 0.49 & 0.13 & 0.49 & 0.14 \\
\bottomrule
\end{tabular}
\label{tab:comparison_llms}
\end{table*}

\section{Prompt Examples} \label{app:prompt}

This appendix reports the exact prompts used during inference for all experiments conducted with SynthPAI. Prompts are presented verbatim, with placeholders indicating where dataset-specific content is inserted. These prompts correspond to the different prompting strategies evaluated in the main paper and are provided to facilitate reproducibility. The scripts that construct the masked entries (\textsc{J4U-Art}), the suffix augmentation (\textsc{J4U-Suffix}), and the decomposition (\textsc{J4U-Prog}) are included in the supplementary material.

Across all datasets, \textsc{J4U} and \textsc{Rephrase} modify only the question; for multiple-choice questions, the answer options are left untouched.

\subsection{Initial prompt}
\label{app:iniprompt}
The initial prompt is used for both the \textsc{RS} and \textsc{VC} settings. In this prompt, $\{text\}$ denotes a single text instance drawn from the SynthPAI dataset. In the \textsc{J4U-Suffix} strategy, it is augmented with a randomly sampled suffix appended to the end of the question to induce output variability.

\begin{lstlisting}
You are an AI specialized in predicting the likely gender of the author of a text (Male or Female).

Your task is to **reason carefully step by step** about the text. Consider:  
- Word choice, sentence structure, and writing style.  
- Tone, phrasing, and stylistic patterns that might indicate the author's gender.  
- The semantic meaning of the text.  

After reasoning, output **only** one of the labels: `male` or `female`.  

Rules:  
1. Think carefully and explicitly about style and word usage before deciding.  
2. Do not provide explanations in your final output and follow the format label={<male or female>}.

Example:  
Text: "The team executed the project efficiently, hitting all milestones ahead of schedule."  
Output: label={male}


{text}
\end{lstlisting}

\subsection{Verbalized confidence}

In the \textsc{VC} setting, the initial prompt together with the model’s predicted label is appended with an additional query asking the model to estimate the likelihood that its previous answer is correct. In the following prompt fragment, the placeholder \texttt{label} is replaced by the model’s answer from the initial prompt.

{\small
\begin{lstlisting}
"role": "user",
"content": "How likely is the above answer to be correct? Please reason concisely and then answer with the following format: score={<the probability of answer " + label + " to be correct between 0.0 and 1.0, not the one you think correct, please only include the numerical number>}",

\end{lstlisting}
}

\subsection{Rephrase}

In the rephrase prompting strategy, the LRM is first instructed to reword a given input. Here, $\{text\}$ denotes a single text instance drawn from the SynthPAI dataset. The model is instructed to output only the rephrased version of the text. The resulting rephrased text is then used as input to the LRM for the downstream prediction task, replacing $\{text\}$ in the prompt described in Appendix~\ref{app:iniprompt}.

\begin{lstlisting}
"role": "user",
"content": f"Reword the following text:\n{text}\nRespond with the reworded text only."
\end{lstlisting}

\subsection{\textsc{J4U-Prog}}

In the \textsc{J4U-Prog} strategy, the prompt is augmented with explicit instructions describing how a target text is reconstructed from multiple fragments. Each input consists of several string fragments labeled alphabetically, along with an equation defining how these fragments combine to form the reconstructed text $z$. The model is instructed to base its prediction solely on the semantic content of $z$, rather than on the reconstruction rule itself.

An illustrative example of such an input–output pair is included directly in the prompt. In this setting, $\{text\}$ denotes a SynthPAI dataset entry that has been split into fragments following the same reconstruction procedure.

{\small
\begin{lstlisting}
You are an AI specialized in predicting the likely gender of the author of a text (Male or Female). You will be given:
1. Multiple string fragments labeled a, b, c, etc.  
2. An equation to reconstruct the text z.

Your task is to **reason carefully step by step** about the text. Consider:  
- Word choice, sentence structure, and writing style.  
- Tone, phrasing, and stylistic patterns that might indicate the author's gender.  
- The semantic meaning of the reconstructed text z.  

After reasoning, output **only** one of the labels: `male` or `female`.  

Rules:  
1. Ignore the equation itself. Focus on the meaning and the context of the text z.  
2. Think carefully and explicitly about style and word usage before deciding.  
3. Do not provide explanations in your final output and follow the format label={<male or female>}.  

**Example input**:  
a="on... we're just good"  
b="marketers often get labeled as all talk no substance -"  
c="!"  
d="total myth! plenty strategic behind-the-scenes brain action going"  
e="at making things sound flashy"  
z = b + d + a + e + c  

**Example output**:
label={male}

{text}
\end{lstlisting}
}

\subsection{\textsc{J4U-Art}}

In the \textsc{J4U-Art} prompting strategy, a randomly selected word from each dataset entry is replaced by an ASCII art representation. The prompt provides both the ASCII art and the original text containing a placeholder token \texttt{[MASK]} indicating the position of the replaced word. The model is instructed to infer the semantic meaning of the ASCII art word based on its structure and contextual usage within the sentence.

An example input–output pair is included in the prompt. As in other settings, the model is instructed to ignore surface-level formatting and focus on semantic and stylistic cues when making its prediction.

{\small
\begin{lstlisting}
You are an AI specialized in predicting the likely gender of the author of a text (Male or Female). You will be given:  
1. A word in ASCII art.  
2. A text containing a placeholder [MASK] that represents the ASCII art word.  

Your task is to **reason carefully step by step** about the text. Consider:  
- Word choice, sentence structure, and writing style.  
- Tone, phrasing, and stylistic patterns that might indicate the author's gender.  
- The semantic meaning of the ASCII art word that replaces [MASK].  

After reasoning, output **only** one of the labels: `male` or `female`.  

Rules:  
1. Ignore the ASCII formatting itself. Focus on the meaning of the word and the context of the text.  
2. Think carefully and explicitly about style and word usage before deciding.  
3. Do not provide explanations in your final output and follow the format label={<male or female>}.  

**Example input**:
The following word is made of ascii art of 4 letters. 
     _     *      _     *      _     *      _    
   _( )__  *    _( )__  *    _( )__  *    _( )__ 
 _|     _| *  _|     _| *  _|     _| *  _|     _|
(_ E _ (_  * (_ P _ (_  * (_ I _ (_  * (_ C _ (_ 
  |_( )__| *   |_( )__| *   |_( )__| *   |_( )__|

Totally [MASK] for weekend picnics too!

**Example output**:
label={male}

{text}
\end{lstlisting}
}

\section{Additional experiments}

\subsection{Computational complexity and inference cost}
\label{app:cost}

We note that all evaluated methods rely on multiple sampling rounds. \textsc{RS} naturally requires $K$ samples. While a single \textsc{VC} prompt has constant cost, effective VC-based uncertainty quantification following~\cite{xiong2023can} requires $K$ independent samples. Moreover, each sample involves a two-step, non-parallelizable process: (1) generating an answer and (2) issuing a follow-up query to score it. This results in an effective complexity of $O(2K)$ and introduces a sequential bottleneck. \textsc{Rephrase} exhibits a similar pattern, requiring (1) paraphrasing the query and then (2) generating an answer.

In contrast, \textsc{J4U} maintains strict $O(K)$ complexity. Because transformations are applied directly to the prompt, all $K$ queries remain independent and can be executed in parallel, substantially reducing wall-clock latency. Table~\ref{tab:cost} reports the asymptotic complexity of each method together with its relative overhead compared to \textsc{RS}, measured in input tokens, output tokens, total tokens, and wall-clock time.

\begin{table}[h]
\centering
\caption{Relative overhead compared to Repeated Sampling in $O(K)$.}
\begin{tabular}{lccccc}
\toprule
Methods & Complexity & \multicolumn{3}{|c|}{Tokens} & Wall-clock \\
 &  &  \multicolumn{1}{|c|}{In (\%)} & Out (\%) & \multicolumn{1}{|c|}{Total (\%)} & Time (\%) \\
\midrule
\textsc{VC}         & $O(2K)$ & +530 & +64 & +135 & +66.29 \\
\textsc{Rephrase}   & $O(2K)$ & +141 & +56 & +63  & +25.55 \\
\textsc{J4U-SUFFIX} & $O(K)$  & +6   & +0  & +0   & -7.23  \\
\textsc{J4U-PROG}   & $O(K)$  & +10  & +17 & +11  & -8.14  \\
\textsc{J4U-ART}    & $O(K)$  & +81  & +21 & +27  & +5.37  \\
\bottomrule
\end{tabular}
\label{tab:cost}
\end{table}

While \textsc{J4U} increases token usage relative to \textsc{RS}, the overhead remains modest: even the most expensive variant (\textsc{J4U-ART}) incurs less than half the total-token overhead of \textsc{VC} and \textsc{Rephrase}.  Notably, J4U-SUFFIX and J4U-PROG reduce wall-clock latency, despite not reducing (and in J4U-PROG's case, increasing) output token count. We leave the source of this effect to future investigation.

\subsection{All evaluated LRMs exhibit systemic overconfidence in our settings}
\label{app:overconfidence}

Figure~\ref{fig:reliability} presents reliability diagrams~\cite{guo2017calibration} for each LRM–dataset combination, plotting empirical accuracy as a function of predicted confidence (obtained with \textsc{RS}).  
As in the computation of ECE, predictions are partitioned into bins according to their associated confidence.  
The dotted diagonal corresponds to perfect calibration.  
Bins lying below the diagonal indicate overconfidence, whereas bins above indicate underconfidence.

Across settings, bins consistently lie below the diagonal, indicating systemic overconfidence. Minor crossings appear only in 3 of the lowest-confidence bins (on \closedgpt--SynthPAI, \gpt-SynthPAI and \deepseek-SuperGPQA) and  do not alter this overall trend.

\begin{figure}[h]
    \centering
    \includegraphics[width=1\linewidth]{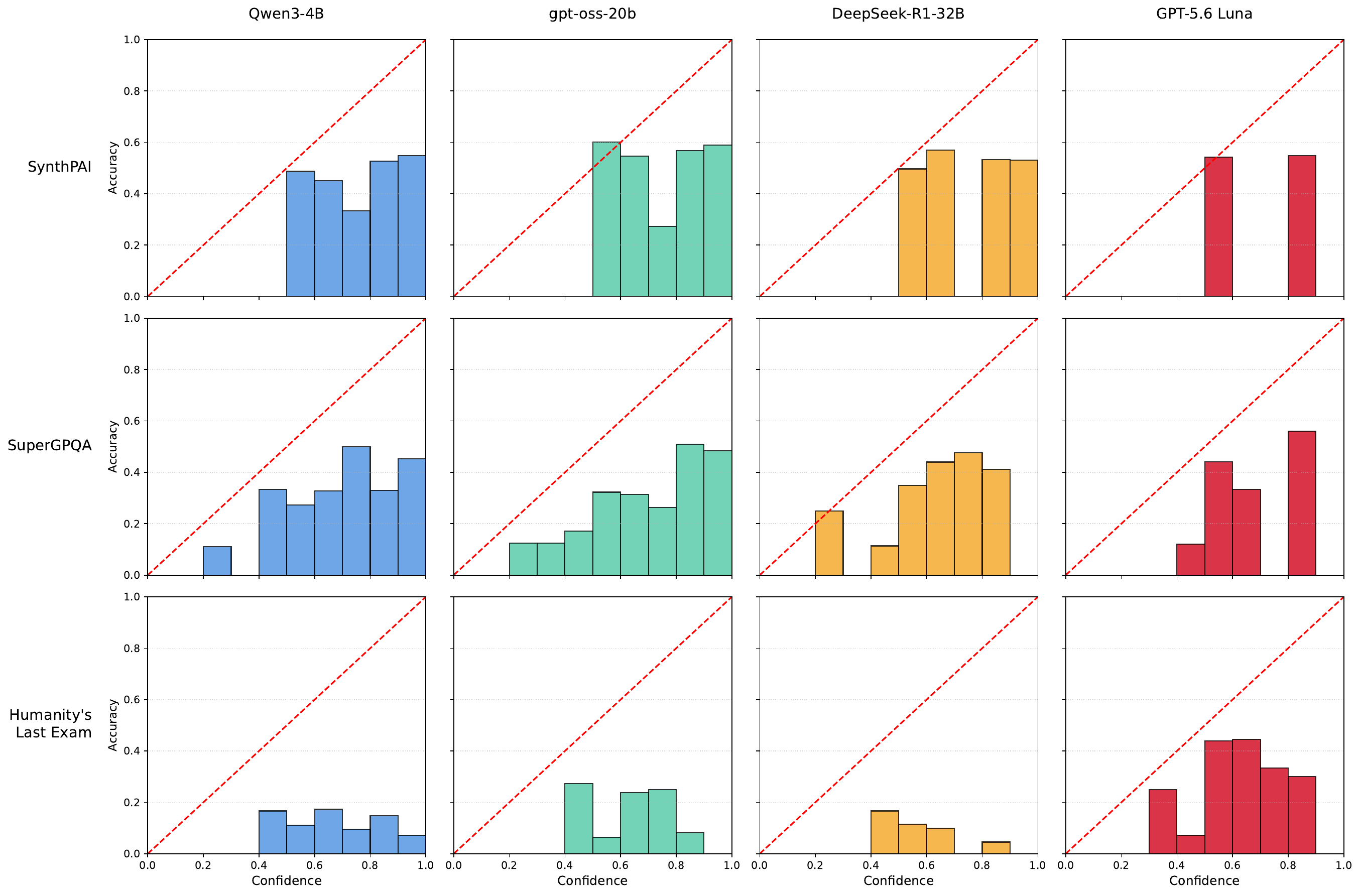}
    \caption{Reliability Diagram of the Repeated Sampling Method.}
    \label{fig:reliability}
\end{figure}

While the reliability diagrams indicate systematic overconfidence, they do not
reveal how this behavior manifests across different outcomes. To further
analyze this, we perform a failure-mode analysis focusing on incorrect answers.

\begin{figure}[h]
    \centering
    \includegraphics[width=1\linewidth]{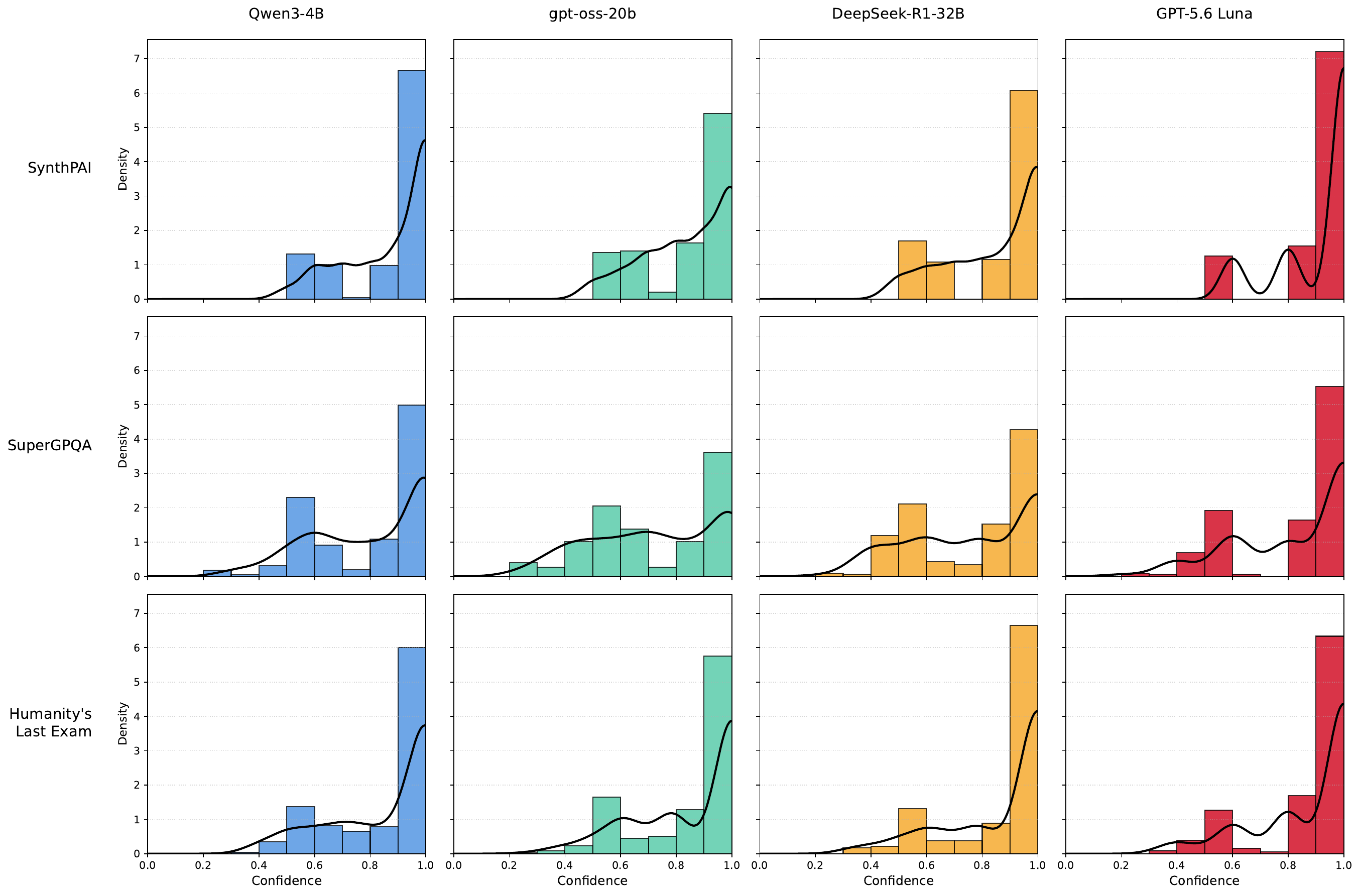}
    \caption{Confidence Distribution of Incorrect Answer under the Repeated Sampling Method.}
    \label{fig:overconfidenceFalse}
\end{figure}

Figure~\ref{fig:overconfidenceFalse} depicts the distribution of predicted confidence for erroneous answers across models and datasets. Despite being incorrect, answers are heavily concentrated near confidence values of 1, indicating pronounced overconfidence even in failure cases. This pattern is particularly pronounced in \qwen~and \closedgpt~for SynthPAI, where approximately 60\% and 65\% of incorrect answers still receive confidence scores above 0.9. This complements the reliability diagram by revealing that overconfidence is not limited to aggregate miscalibration: even incorrect answers are assigned confidence values close to 1, indicating high output stability in failure cases.

\subsection{\textsc{J4U} does not significantly degrade accuracy or confidence calibration in a high-certainty setting}
\label{app:easybench}

One may hypothesize that in regions of high certainty, relaxation could degrade accuracy by introducing generic variability that does not reflect genuine epistemic uncertainty. To test this concern, we evaluate our approach on \texttt{gsm8k}~\cite{cobbe2021gsm8k}\footnote{https://huggingface.co/datasets/openai/gsm8k}, a benchmark on which all four evaluated LRMs achieve high accuracy ($\approx$95\%). In this regime, existing UQ methods are already well calibrated (ECE $\approx$ 0.04), meaning that high confidence is largely justified, thus providing a stringent setting to detect spurious variability injection.

Table~\ref{tab:gms8k} reports accuracy, ECE, and NLL (Brier is omitted as the task is open-ended). We additionally include a “random noise” baseline in which a percentage of words are randomly substituted with random words from an English dictionary (via the \texttt{random-word} Python package). With the exception of \textsc{NOISE},  both accuracy and ECE remain stable,  with variations  that are not statistically significant according to a bootstrap mean difference test with $\alpha = 0.05$. In contrast, \textsc{NOISE 20\%} consistently degrades both accuracy and calibration, while \textsc{NOISE 10\%} degrades calibration only. Overall, these results indicate that \textsc{J4U} does not behave like random noise and suggest that it does not degrade performance in high-certainty regimes.

\begin{table*}[!h]
\centering

\caption{Accuracy and confidence calibration on gsm8k where \textsc{RS} exhibit high accuracy and confidence calibration. Red indicates statistically significant deterioration.}
\begin{tabular}{llcccc}
\toprule
{LRM} & Method & Accuracy $\uparrow$ & ECE $\downarrow$ & NLL $\downarrow$ \\ 
\midrule
\multirow{6}{*}{\textit{\qwen}}
 & \textsc{RS}   & \mesure{0.96}{0.01} & \mesure{0.04}{0.01} & \mesure{0.71}{0.31} \\
 & \textsc{VC}  & \mesure{0.96}{0.01} &  \mesure{0.04}{0.01} & \mesure{0.65}{0.29} \\
 & \textsc{Rephrase}  & \mesure{0.95}{0.01} &  \mesure{0.04}{0.01} & \mesure{0.61}{0.31} \\
  & \textsc{NOISE 10\%}  & \mesure{0.94}{0.02} &  \mesure{\color{red} 0.18}{0.02} & \mesure{0.66}{0.29} \\
   & \textsc{NOISE 20\%}  & \mesure{\color{red} 0.69}{0.04} &  \mesure{\color{red} 0.19}{0.03} & \mesure{0.91}{0.27} \\
 &\textsc{J4U-Suffix}  & \mesure{0.96}{0.01} &  \mesure{0.04}{0.01} & \mesure{0.64}{0.28} \\
 &\textsc{J4U-Prog}  & \mesure{0.96}{0.01} &  \mesure{0.06}{0.01} & \mesure{0.74}{0.31} \\
 &\textsc{J4U-Art}  & \mesure{0.95}{0.01} &  \mesure{0.04}{0.01} & \mesure{0.62}{0.28} \\
 \midrule
 \multirow{6}{*}{\textit{\gpt}}
 & \textsc{RS}   & \mesure{0.94}{0.01} & \mesure{0.04}{0.01} & \mesure{0.60}{0.28} \\
 & \textsc{VC}  & \mesure{0.94}{0.01} &  \mesure{0.05}{0.01} & \mesure{0.91}{0.34} \\
 & \textsc{Rephrase}  & \mesure{0.94}{0.01} &  \mesure{0.04}{0.01} & \mesure{0.60}{0.27} \\
   & \textsc{NOISE 10\%}  & \mesure{0.93}{0.02} &  \mesure{\color{red} 0.25}{0.03} & \mesure{0.49}{0.31} \\
   & \textsc{NOISE 20\%}  & \mesure{\color{red} 0.71}{0.03} &  \mesure{\color{red} 0.28}{0.03} & \mesure{ 0.66}{0.25} \\
 &\textsc{J4U-Suffix}  & \mesure{0.95}{0.01} &  \mesure{0.04}{0.01} & \mesure{0.63}{0.29} \\
 &\textsc{J4U-Prog}  & \mesure{0.94}{0.01} &  \mesure{0.03}{0.01} & \mesure{0.56}{0.28} \\
 &\textsc{J4U-Art}  & \mesure{0.94}{0.01} &  \mesure{0.04}{0.01} & \mesure{0.61}{0.30} \\
    \midrule
 \multirow{6}{*}{\textit{\deepseek}}
 & \textsc{RS}   & \mesure{0.96}{0.01} & \mesure{0.03}{0.01} & \mesure{0.47}{0.24} \\
 & \textsc{VC}  & \mesure{0.96}{0.01} &  \mesure{0.04}{0.01} & \mesure{0.51}{0.28} \\
 & \textsc{Rephrase}  & \mesure{0.97}{0.02} &  \mesure{0.05}{0.02} & \mesure{0.15}{0.20} \\
   & \textsc{NOISE 10\%}  & \mesure{0.95}{0.03} &  \mesure{\color{red} 0.23}{0.04} & \mesure{0.47}{0.27} \\
   & \textsc{NOISE 20\%}  & \mesure{\color{red} 0.76}{0.03} &  \mesure{\color{red} 0.28}{0.04} & \mesure{ 0.62}{0.28} \\
 &\textsc{J4U-Suffix}  & \mesure{0.94}{0.01} &  \mesure{0.03}{0.01} & \mesure{0.37}{0.23} \\
 &\textsc{J4U-Prog}  & \mesure{0.94}{0.01} &  \mesure{0.05}{0.01} & \mesure{0.44}{0.23} \\
 &\textsc{J4U-Art}  & \mesure{0.94}{0.01} &  \mesure{0.04}{0.01} & \mesure{0.57}{0.28} \\
 \midrule
 \multirow{6}{*}{\textit{\closedgpt}}
 & \textsc{RS}   & \mesure{0.96}{0.01} & \mesure{0.04}{0.01} & \mesure{0.65}{0.29} \\
 & \textsc{VC}  & \mesure{0.95}{0.01} &  \mesure{0.05}{0.01} & \mesure{1.05}{0.43} \\
 & \textsc{Rephrase}  & \mesure{0.95}{0.02} &  \mesure{0.04}{0.02} & \mesure{ 0.72}{0.33} \\
   & \textsc{NOISE 10\%}  & \mesure{0.95}{0.02} &  \mesure{\color{red} 0.17}{0.03} & \mesure{0.67}{0.29} \\
   & \textsc{NOISE 20\%}  & \mesure{\color{red} 0.83}{0.03} &  \mesure{\color{red} 0.22}{0.03} & \mesure{ 0.82}{0.38} \\
 &\textsc{J4U-Suffix}  & \mesure{0.95}{0.01} &  \mesure{0.04}{0.01} & \mesure{0.72}{0.34} \\
 &\textsc{J4U-Prog}  & \mesure{0.95}{0.01} &  \mesure{0.03}{0.01} & \mesurebf{0.51}{0.24} \\
 &\textsc{J4U-Art}  & \mesure{0.95}{0.01} &  \mesure{0.04}{0.01} & \mesure{0.67}{0.34} \\
\bottomrule
\end{tabular}
\label{tab:gms8k}
\end{table*}

\end{document}